\documentclass[10pt,journal,compsoc]{IEEEtran}
%
\ifCLASSOPTIONcompsoc
  \usepackage[nocompress]{cite}
\else
  \usepackage{cite}
\fi

\ifCLASSINFOpdf
\else
\fi

\usepackage[pdftex]{graphicx}
\DeclareGraphicsExtensions{.pdf,.jpeg,.png}
\usepackage{amsmath}
\usepackage{amsthm}
\usepackage{algorithmic}
\usepackage{array}
\usepackage{fixltx2e}
\usepackage{stfloats}
\usepackage{algorithm}
\usepackage{url}
\usepackage{cite}
\newtheorem{prop}{Proposition}

\hyphenation{op-tical net-works semi-conduc-tor}
\usepackage{todonotes}

\def\ie{\textit{i}.\textit{e}. }
\def\eg{\textit{e}.\textit{g}. }
\def\wrt{\textit{w}.\textit{r}.\textit{t}.}
\def\etal{\textit{et al}.}
\def\etc{etc.}
\def\VEC#1{\mbox{\boldmath $#1$}}
\def\PAIR#1#2{\langle #1, #2 \rangle}

\begin{document}

\title{
  Structure of Multiple Mirror System from Kaleidoscopic Projections of Single
  3D Point}
%

%
%

\author{Kosuke~Takahashi,~\IEEEmembership{Non~Member,~IEEE,}
  and~Shohei~Nobuhara,~\IEEEmembership{Member,~IEEE}
  \IEEEcompsocitemizethanks{\IEEEcompsocthanksitem K. Takahashi and S. Nobuhara
    are with the Graduate School of Informatics Kyoto University, Kyoto,
    Japan.\protect\\
E-mail: t.kosuke.1210@gmail.com, nob@i.kyoto-u.ac.jp}
\thanks{\copyright2021 IEEE.  Personal use of this material is permitted.  Permission from IEEE must be obtained for all other uses, in any current or future media, including reprinting/republishing this material for advertising or promotional purposes, creating new collective works, for resale or redistribution to servers or lists, or reuse of any copyrighted component of this work in other works.}
}

%
%

\markboth{}{}
%


\IEEEtitleabstractindextext{%
\begin{abstract}
  This paper proposes a novel algorithm of discovering the structure of a kaleidoscopic imaging system that consists of multiple planar mirrors and a camera. The kaleidoscopic imaging system can be recognized as the virtual multi-camera system and has strong advantages in that the virtual cameras are strictly synchronized and have the same intrinsic parameters. In this paper, we focus on the extrinsic calibration of the virtual multi-camera system. The problems to be solved in this paper are two-fold. The first problem is to identify to which mirror chamber each of the 2D projections of mirrored 3D points belongs. The second problem is to estimate all mirror parameters, \ie, normals, and distances of the mirrors. The key contribution of this paper is to propose novel algorithms for these problems using a single 3D point of unknown geometry by utilizing a {\it kaleidoscopic projection constraint}, which is an epipolar constraint on mirror reflections. We demonstrate the performance of the proposed algorithm of chamber assignment and estimation of mirror parameters with qualitative and quantitative evaluations using synthesized and real data.
\end{abstract}

\begin{IEEEkeywords}
Kaleidoscopic Imaging System, Camera Calibration, Mirror.
\end{IEEEkeywords}}


\maketitle

\IEEEdisplaynontitleabstractindextext


\IEEEpeerreviewmaketitle

\IEEEraisesectionheading{\section{Introduction}\label{sec:section_1}}
\IEEEPARstart{A} kaleidoscopic imaging system consisting of multiple planar mirrors is a practical approach to realize a multi-view capture of a target by synchronized cameras with identical intrinsic parameters (Figure \ref{fig:multi_mirror_system}). It can be recognized as a virtual multiple-view system and has been widely used for 3D shape reconstruction by stereo\cite{nane98stereo,goshtasby93design,gluckman2001catadioptric}, shape-from-silhouette\cite{huang06contour,forbes06shape,reshetouski11three}, structured-lighting\cite{lanman09surround,tahara15interference}, and time of flight (ToF)\cite{nobuhara16single} and also for reflectance analysis\cite{ihrke2012kaleidoscopic,tagawa12eight,inoshita13full}, light-field imaging\cite{levoy04synthetic,sen05dual,fuchs12design}, etc. Such applications require extrinsic calibration of the virtual cameras, \ie, estimation of their poses and positions. Since calibrating the virtual cameras is identical to calibrating the mirror normals and distances, this paper proposes a novel algorithm of extrinsic calibration of the kaleidoscopic imaging system by estimating the mirror parameters from a kaleidoscopic projection of a single 3D point.

To calibrate the kaleidoscopic imaging system, two problems need to be solved: {\it chamber assignment} and {\it mirror parameters estimation}. The first problem is to identify to which \textit{chamber} (\ie, the image region corresponding to the direct view, the first reflection, the second reflection, and so forth in the captured image) each 2D projection of mirrored 3D points belongs. The second problem is to estimate the normals and the distances of all mirror planes from the 2D projections in chambers.

The key contribution of this paper is to propose novel algorithms for solving these two problems by introducing an epipolar constraint on mirror reflections that is satisfied by projections of high-order reflections \cite{gluckman2001catadioptric, ying10geometric, mariottini2010catadioptric}, which is called a {\it kaleidoscopic projection constraint} in this paper. This constraint provides multiple linear equations on mirror parameters from a single 3D point and enables the proposed method to conduct a 3D validation on candidates of chamber assignments and to estimate mirror parameters linearly. Note that our method assumes that the kaleidoscopic imaging system satisfies the following conditions,
\begin{itemize}
\item no pair of mirrors is parallel,
\item the second reflections of the single 3D point are observable.
\end{itemize}

\begin{figure}[t]
\centering
\includegraphics[width=0.9\linewidth]{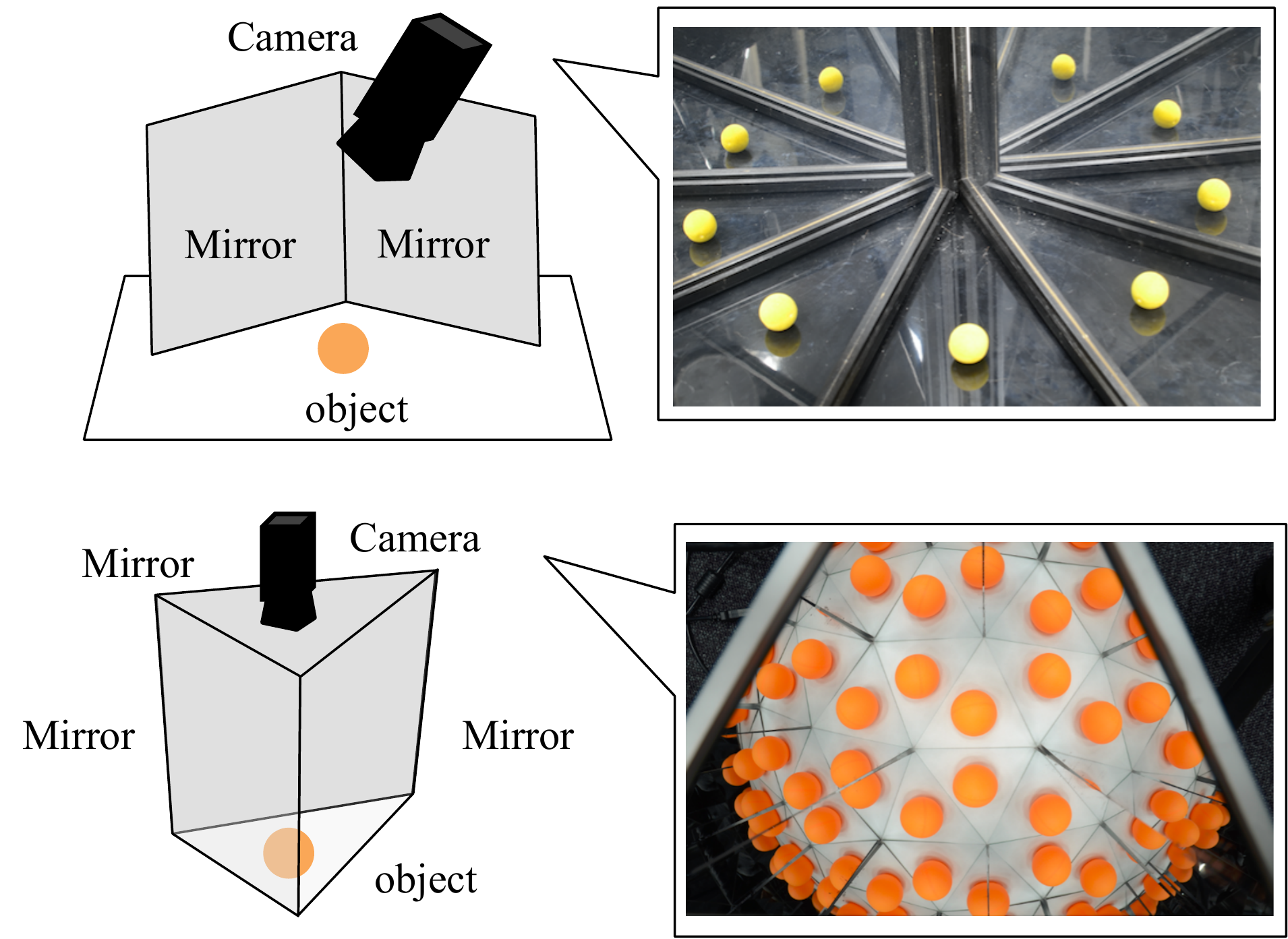}
\caption{Kaleidoscopic imaging system consists of multiple planer mirrors. Up: two mirrors. Bottom: three mirrors. The projections of the reflected target object in each chamber are recognized as the target observed by the virtual cameras, which are generated by the planer mirrors.}
\label{fig:multi_mirror_system}
\end{figure}

An earlier version of this study was presented in \cite{takahashi17linear}. Compared with \cite{takahashi17linear}, this paper provides a novel algorithm of chamber assignment based on the kaleidoscopic projection constraint, while \cite{takahashi17linear} assumes that the chamber labels are assigned by hand beforehand. This chamber assignment algorithm realizes a fully automatic calibration of the kaleidoscopic imaging system. In addition, this version updates the kaleidoscopic bundle adjustment in the calibration step and conducted exhaustive evaluations with conventional approaches.

The rest of this paper is organized as follows. Section \ref{sec:section_2} reviews related studies to clarify our contribution. Section \ref{sec:section_3} introduces the problem definition in our kaleidoscopic imaging system and the key constraint; kaleidoscopic projection constraint. Sections \ref{sec:section_5} and \ref{sec:section_6} propose our chamber assignment algorithm and mirror parameters estimation algorithm, respectively. Section \ref{sec:section_7} evaluates the proposed method quantitatively and qualitatively, and section \ref{sec:section_8} discusses the limitations and possible extensions. Section \ref{sec:section_9} concludes the paper and outlines future works.

\section{Related works}\label{sec:section_2}
Conventional studies utilizing mirrors can be categorized
into two groups: (1) studies utilizing mirrors as supplemental
devices for calibrating camera systems, and (2) studies integrating mirrors as components into their imaging systems.

\subsection{Mirrors as supplemental devices for calibration}\label{sec:related_work_1} The first group utilizes mirrors for extrinsic calibration of a camera and a reference object which is located out of its field-of-view and can be categorized into two subgroups: with planar mirrors and with non-planar mirrors.

Sturm \etal \cite{sturm06how} and Rodrigues \etal \cite{rodorigues10camera} propose a planar-mirror based method of computing the pose of a reference object without a direct view. Hesch \etal\cite{hesch08mirror, Hesch2010extrinsic} assume a camera-based robot in which no body parts are visible from the camera and propose a linear method of estimating their relative pose and positions with a planar mirror in a perspective-three-point (P3P) problem scenario. Takahashi \etal \cite{takahashi12new} propose a planar mirror-based method with a minimal configuration using three mirror poses and three reference points based on orthogonality constraints. Delaunoy \etal \cite{delaunoy2014two} calibrate a mobile device that has one display and two cameras on its front and back side by using planar mirror reflections and reconstructed scenes. Kumar \etal \cite{kumar08simple} and Bunshnevskiy \etal \cite{Bushnevskiy_2016_CVPR} calibrate a camera network and omnidirectional camera consisting of multiple perspective cameras with planar mirrors.

Agrawal \cite{agrawal13extrinsic} calibrates by utilizing a coplanarity constraint satisfied by the reflections of eight reference points with a single spherical mirror pose. Instead of using a spherical mirror, Nitschke \etal \cite{nitschke2011display} and Takahashi \etal \cite{takahashi2016extrinsic} recognize the human eyeball as a spherical mirror and proposed a cornea-reflection based calibration method in a display-camera system.

\subsection{Mirrors as imaging components}
The second group utilizes non-planar or planar mirrors as a component of their imaging systems. 

Non-planar mirrors are commonly employed for widening field-of-view. One of the most common usages of non-planar mirrors in an imaging system is for capturing omnidirectional images. Scaramuzza \etal\cite{scaramuzza2006flexible} proposes a single viewpoint omnidirectional camera with a hyperbolic mirror. In this camera system\cite{scaramuzza2006flexible}, the extrinsic parameters are estimated by solving a two-step least-squares linear minimization problem from the observations of a calibration pattern.

On the other hand, planar mirrors are used for capturing multi-view images. The virtual cameras generated by planar mirrors have identical intrinsic parameters and are time-synchronized to the original camera. These can be a strong advantage for multi-view applications, such as reflectance analysis and 3D shape reconstruction. Mukaigawa \etal\cite{mukaigawa2011hemispherical} introduce a hemispherical confocal imaging using a turtleback reflector, and Inoshita \etal\cite{inoshita13full} use it to measure full-dimensional (8-D) BSSRDF (bidirectional scattering-surface reflectance distribution function). Nane and Nayer\cite{nane98stereo} propose a computational stereo system using a single camera and various types of mirrors, such as planar, ellipsoidal, hyperboloidal, and paraboloidal ones. In the case of a planar mirror, they calibrate this stereo system by computing a fundamental matrix from point correspondences. Gluckman and Nayer\cite{gluckman2001catadioptric} discuss the case of two planar mirrors and propose an efficient calibration method based on the ideas that the relative orientations of virtual cameras are restricted to the class of planar motion.

In the context of kaleidoscopic imaging exploiting high-order multiple reflections, Ihrke \etal\cite{ihrke2012kaleidoscopic} and Reshetouski and Ihrke\cite{reshetouski2013mirrors,reshetouski13discovering,reshetouski11three} have proposed a theory on modeling the chamber detection, segmentation, bounce tracing, shape-from-silhouette, \etc. As introduced in the previous section, the essential problems in this context are {\it chamber assignment} and {\it mirror parameters estimation}.

Reshetouski and Ihrke\cite{reshetouski13discovering} solve the chamber assignment problem by utilizing constraints on an apparent 2D distances between 2D projections for the specific mirror configurations, that is, all mirrors are perpendicular to the ground plane. For automatically assigning chambers in general mirror configurations, our paper introduces the kaleidoscopic projection constraint that provides 3D information of the mirror system and proposed the efficient chamber assignment algorithm using the geometric constraints inspired by Reshetouski and Ihrke \cite{reshetouski13discovering}.

Regarding the mirror parameters estimation, a per-mirror basis approach is utilized in some conventional approaches\cite{ihrke2012kaleidoscopic,reshetouski2013mirrors} without considering their kaleidoscopic, \ie, multiple reflections, relationships. That is they detect a chessboard first\cite{zhang2000flexible} in each of the chambers and then estimate the mirror normals and the distances from the 3D chessboard positions in the camera frame. This per-mirror calibration can be also done by applying the algorithms in Section \ref{sec:related_work_1}. Ying \etal \cite{ying10geometric} realized a calibration algorithm using a geometric relationship of multiple reflections that lie in a 3D circle. Ying \etal's approach can estimate the mirror normals and distances per mirror-pair basis without any reference object whose 3D geometry is known. However, it does not handle reflections by multiple mirrors or fully utilize kaleidoscopic relationships on multiple reflections. 

Compared with these approaches, the proposed method has two main advantages:
\begin{itemize}
\item our chamber assignment can be applied to general mirror poses, and
\item our mirror parameters calibration can estimate parameters satisfying kaleidoscopic projection constraints with explicitly considering multiple reflections by multiple mirrors.
\end{itemize}
Furthermore, our method does not require a reference object of known geometry. That is, our method requires a single 3D point whose 3D position is not given beforehand.

These advantages are realized by introducing a {\it kaleidoscopic projection constraint}. The following sections first define the geometric model of the kaleidoscopic imaging system and then introduce the kaleidoscopic projection constraint.

\section{Problem definition}\label{sec:section_3}
\subsection{Notation and goals}
As illustrated by Figure \ref{fig:mirror_model}, consider a 3D point $\VEC{p}$ and its reflection $\VEC{p}'$ by a mirror $\pi$. Their projections $\VEC{q}$ and $\VEC{q}'$ are given by the perspective
projection:
\begin{equation}
 \lambda \VEC{q} = A \VEC{p}, \quad \lambda' \VEC{q}' = A \VEC{p}', \label{eq:projection}
\end{equation}
where $A$ is the intrinsic matrix of the camera, and $\lambda$ and $\lambda'$ are the depths from the camera. Note that we assume that the intrinsic matrix is known and we use the camera coordinate system as the world coordinate system throughout the paper.

Let $\VEC{n}$ and $d > 0$ denote the normal and the distance of the mirror $\pi$ satisfying
\begin{equation}
 \VEC{n}^\top \VEC{x} + d = 0, \label{eq:planar_constraint}
\end{equation}
where $\VEC{x}$ is a 3D position in the scene. Here the normal vector is oriented to the camera center.

As illustrated in Figure \ref{fig:mirror_model}, the distance $t$ from $\VEC{p}$ and $\VEC{p}'$ to the mirror $\pi$ satisfies:
\begin{equation}
 \VEC{p} = \VEC{p}' + 2 t \VEC{n}.
\end{equation}
The projection of $\VEC{p}'$ to $\VEC{n}$ gives
\begin{equation}
 t + d = - \VEC{n}^\top \VEC{p}'.
\end{equation}
By eliminating $t$ from these two equations, we have
\begin{align}
 & \VEC{p} =-2(\VEC{n}^\top \VEC{p}'+d)\VEC{n}+\VEC{p}', \\
 \Leftrightarrow &  \tilde{\VEC{p}} = S \tilde{\VEC{p}}'
 = \begin{bmatrix}
    H  & -2d\VEC{n}\\
    \VEC{0}_{1\times 3} & 1
   \end{bmatrix}
 \tilde{\VEC{p}}', \label{eq:householder}
\end{align}
where $H$ is a $3 \times 3$ Householder matrix given by $H = I_{3{\times}3} - 2 \VEC{n} \VEC{n}^\top$.  $\tilde{\VEC{x}}$ denotes the homogeneous coordinate of $\VEC{x}$, and $\VEC{0}_{m{\times}n}$ denotes the $m{\times}n$ zero matrix. $I_{n\times n}$ denotes the $n \times n$ identity matrix. Note that this $S$ also satisfies inverse transformation, that is $\tilde{\VEC{p}}' = S \tilde{\VEC{p}}$.

Suppose the camera observes the target 3D point directly and indirectly via $N_{\pi}$ mirrors as shown in Figure \ref{fig:multi_mirror_system}. Let $\VEC{p}_0$ denote the original 3D point and $\VEC{p}_{i}$ denote the first reflection of $\VEC{p}_0$ by the mirror $\pi_i \: (i=1,\cdots, N_{\pi})$ (Figure \ref{fig:kaleidoscopic_image}). The reflection $\VEC{p}_{i}$ satisfies
\begin{align}
\tilde{\VEC{p}}_{0} = S_i \tilde{\VEC{p}}_{i}
  = \begin{bmatrix}
    H_i  & -2d_i\VEC{n}_i\\
    \VEC{0}_{1\times 3} & 1
  \end{bmatrix}
  \tilde{\VEC{p}}_{i}, 
\end{align}
where $\VEC{n}_i$ and $d_i$ denote the mirror normal and its distance respectively and $H_i$ is given by $H_i = I_{3{\times}3} - 2\VEC{n}_i\VEC{n}_i^{\top}$.

\begin{figure}[t]
\centering
\includegraphics[width=0.9\linewidth]{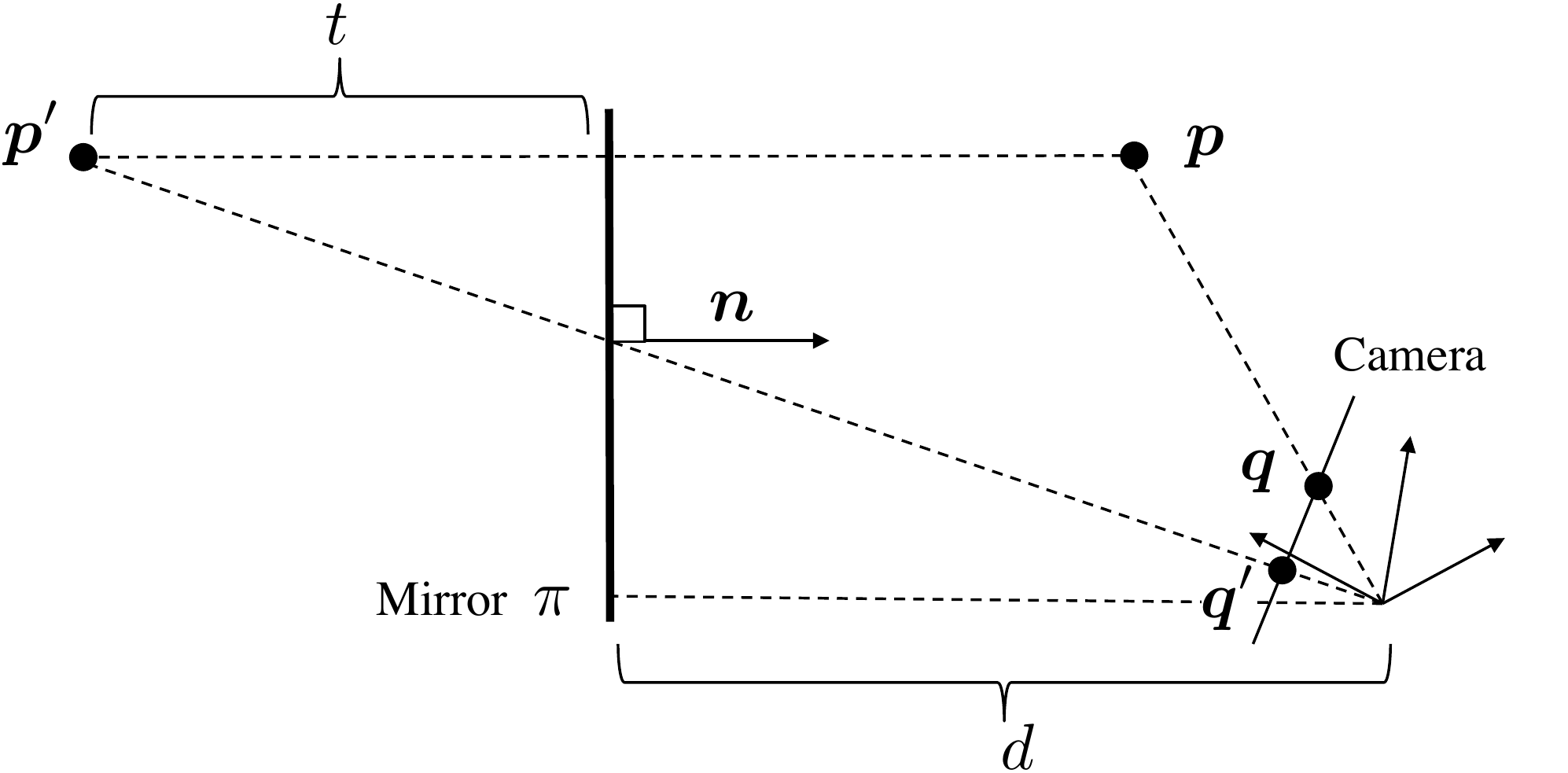}
\caption{The mirror geometry. A mirror $\pi$ of normal $\VEC{n}$ and distance $d$ reflects a 3D point $\VEC{p}$ to $\VEC{p}'$, and they are projected to $\VEC{q}$ and $\VEC{q}'$, respectively.}
\label{fig:mirror_model}
\end{figure}
\begin{figure}[t]
\centering \includegraphics[width=0.9\linewidth]{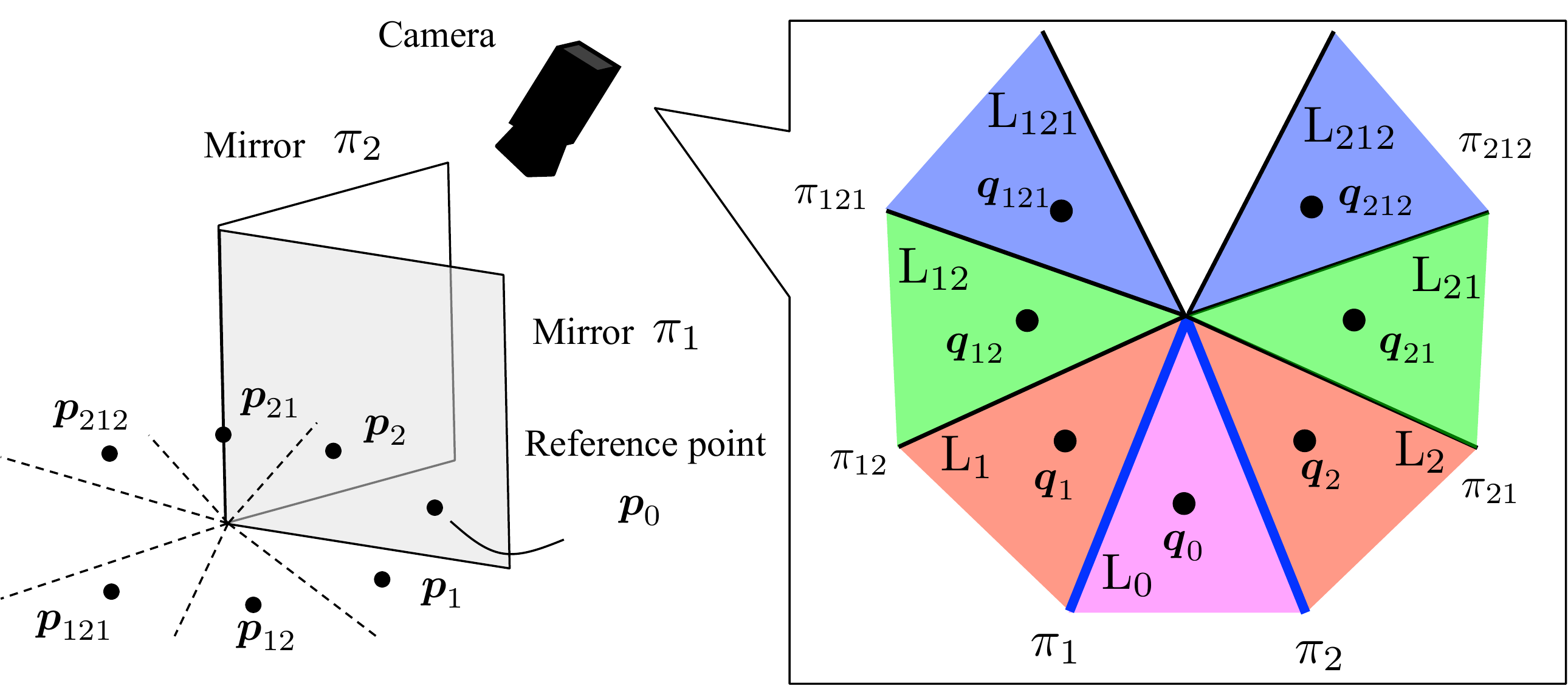}
\caption{Chamber assignment. The magenta region indicates the base chamber. The red, green, and blue regions indicate the chambers corresponding to the first, second, and third reflections, respectively.}
\label{fig:kaleidoscopic_image}
\end{figure}

Furthermore, such mirrors define virtual mirrors as a result of multiple reflections. Let $\pi_{ij} \: (i,j = 1,\cdots, N_{\pi}, \: i \neq j)$ denote the virtual mirror defined as a mirror of $\pi_j$ by $\pi_i$, $\VEC{p}_{ij}$ denote the reflection of $\VEC{p}_i$ by $\pi_{ij}$. As detailed in Hesch \etal\cite{Hesch2010extrinsic}, the multiple reflections matrices $S_{ij}$ and $H_{ij}$ for $\pi_{ij}$ are given by
\begin{equation}
\begin{split}
 S_{ij} & = S_i S_j, \\
 H_{ij} & = H_i H_j. \label{eq:hijhihj}
\end{split}
\end{equation}
The third and further reflections, virtual mirrors, and chambers are defined in the same manner:
\begin{equation}
  \Pi_{k=1}^{N_k} S_{i_k} \: (i_k = 1,2,3, \: i_{k} \neq i_{k+1}),
\end{equation}
where $N_k$ is the number of reflections.

Obviously, the 3D subspaces where $\VEC{p}_0$ and $\VEC{p}_{i}$ can exist are mutually exclusive, and the captured image can be subdivided into regions called \textit{chambers} corresponding to such subspaces. Suppose the perspective projections of $\VEC{p}_x$ are denoted by $\VEC{q}_x \in Q \: (x = 0, 1,\dots, N_{\pi}, 12, 13, \dots)$ in general. In this paper, we denote the 2D region where $\VEC{q}_0$ exists as the \textit{base chamber} $L_0$\cite{reshetouski11three,reshetouski2013mirrors}, and we use $L_x$ to denote the chamber where $\VEC{q}_x$ exists.

Consider a 2D point set $R = \{\VEC{r}_i\}$ detected from the captured image as candidates of $\VEC{q}_x$. To calibrate the kaleidoscopic imaging system extrinsically, two problems need to be solved:
\begin{itemize}
\item to assign the chamber label $L_x$ to $\VEC{r}_i \in R$ to identify to which the chamber each of the projections $\VEC{q}_x$ belongs and,
\item to estimate the parameters of the real mirrors $\pi_i (i=1,\cdots,N_{\pi})$, \ie, normals $\VEC{n}_i$ and distances $d_i$ of them, from kaleidoscopic projections $\VEC{q}_x$.
\end{itemize}

For solving these problems, we utilize a kaleidoscopic projection constraint introduced in \cite{gluckman2001catadioptric, ying10geometric, mariottini2010catadioptric}.

\subsection{Kaleidoscopic projection constraint}\label{sec:section_4}
Suppose the camera observes a 3D point of unknown geometry $\VEC{p}$. The mirror $\pi$ of matrix $S$ defined by the normal $\VEC{n}$ and the distance $d$ reflects $\VEC{p}$ to $\VEC{p}' = S\VEC{p}$ (Eq. \eqref{eq:householder}).

Based on the epipolar geometry\cite{hartley00multiple,ying13self}, $\VEC{n}$, $\VEC{p}$, and $\VEC{p}'$ are coplanar and satisfy
\begin{equation}
 \left(\VEC{n} \times \VEC{p}\right)^\top \VEC{p}' = 0.
\end{equation}
By substituting $\VEC{p}$ and $\VEC{p}'$ by $\lambda A^{-1} \VEC{q}$ and $\lambda' A^{-1} \VEC{q}'$, respectively (Eq. \eqref{eq:projection}), we obtain
\begin{equation}
 \VEC{q}^\top A^{-\top} [\VEC{n}]_{\times}^{\top} A^{-1} \VEC{q}' = 0, \label{eq:coplanarity}
\end{equation}
where $[\VEC{n}]_\times$ denotes the $3 \times 3$ skew-symmetric matrix representing the cross product by $\VEC{n}$, and this is the essential matrix of this mirror-based binocular geometry\cite{ying13self,mariottini2010catadioptric}.

By representing the normalized image coordinates of $\VEC{q}$ and $\VEC{q}'$ by $(x, y, 1)^\top = A^{-1}\VEC{q}$ and $(x', y', 1)^\top = A^{-1}\VEC{q}'$ respectively, Eq. \eqref{eq:coplanarity} can be rewritten as
\begin{equation}
 \begin{pmatrix}
 y - y' & x' - x & x y' - x' y
 \end{pmatrix}
 \VEC{n}
 = 0. \label{eq:kaleidoscopic_projection_constraint}
\end{equation}
We refer to this epipolar constraint on mirror reflections Eq. \eqref{eq:kaleidoscopic_projection_constraint} as the {\it kaleidoscopic projection constraint} in this paper. 

{\bf Single reflection:} Let $\VEC{p}_0$ denote a 3D point and $\VEC{p}_i$ denote the reflection by mirror $\pi_i$. Since $\VEC{p}_i$ is expressed as $\VEC{p}_i = S_i \VEC{p}_0$, the normalized image coordinates of their projections, $\VEC{q}_0$ and $\VEC{q}_i$, obviously satisfy Eq. 
\eqref{eq:kaleidoscopic_projection_constraint} as:
\begin{equation}
 \begin{pmatrix}
   y_0 - y_i & x_i - x_0 & x_0 y_i - x_i y_0
 \end{pmatrix}
 \VEC{n}_i
 = 0, \label{eq:kaleidoscopic_projection_constraint_single}
\end{equation}
where $(x_0, y_0, 1)^\top = A^{-1}\VEC{q}_0$ and $(x_i, y_i, 1)^\top = A^{-1}\VEC{q}_i$.
 
{\bf High-order reflections:} Let $\VEC{p}_{ij} \: (i,j = 1,\cdots, N_{\pi}, \: i \neq j)$ denote the reflection of $\VEC{p}_i$ by $\pi_{ij}$. This $\VEC{p}_{ij}$ can be expressed as $\VEC{p}_{ij} = S_{ij}\VEC{p}_i = S_{i}S_{j}\VEC{p}_0$ based on Eq. \eqref{eq:hijhihj}. Here $\VEC{p}_j = S_j\VEC{p}_0$ holds as well, and we obtain $\VEC{p}_{ij} = S_{i}S_{j}\VEC{p}_0 \Leftrightarrow \VEC{p}_{ij} = S_i\VEC{p}_j$. This equation means that $\VEC{p}_{ij}$ can be recognized as the first reflection of $\VEC{p}_{j}$ by $\pi_i$, and hence the normalized image coordinates of their projections, $\VEC{q}_{ij}$ and $\VEC{q}_j$, also satisfy Eq. \eqref{eq:kaleidoscopic_projection_constraint} as:
\begin{equation}
 \begin{pmatrix}
   y_{j} - y_{ij}' & x_{ij} - x_{j} & x_{j}y_{ij} - x_{ij}y_{j}
 \end{pmatrix}
 \VEC{n}_i
 = 0, \label{eq:kaleidoscopic_projection_constraint_high_order}
\end{equation}
where $(x_{ij}, y_{ij}, 1)^\top = A^{-1}\VEC{q}_{ij}$.

In $N_k$th reflections, we obtain the kaleidoscopic projection constraint between
$\VEC{p}_{i_k} = S_{i_{N_k}}\Pi_{k=1}^{N_k-1} S_{i_k}\VEC{p}_0$ and $\VEC{p}_{i_k'} =
\Pi_{k=1}^{N_k-1} S_{i_k}\VEC{p}_0$ in the same manner.

\section{Chamber assignment}\label{sec:section_5}
Based on the kaleidoscopic projection constraint in the last section, we introduce a new algorithm that identifies the chamber label of each projection. Our algorithm utilizes an analysis-by-synthesis approach that iteratively draws many projections and evaluates their geometric consistency in terms of the kaleidoscopic projection to find the best chamber assignment.

In what follows, the concept of {\it base structure}, \ie, minimal configuration for estimating the real mirror parameters using the kaleidoscopic projection constraint, is introduced. Our algorithm hypothesizes many base structure candidates from observed points and evaluates each of their consistencies as a kaleidoscopic projection. In this evaluation, additional geometric constraints are introduced that reject mirror parameters satisfying the kaleidoscopic projection constraint but are physically infeasible.

Note that we first introduce our algorithm using a two-mirror system (Figure \ref{fig:base_structure}(a)) as an example and then extend it to the general case.

\subsection{Base structure}\label{sec:base_structure}
Suppose $2N_{\pi}$ points of the observed points $R$ are selected and could be hypothesized as $\VEC{q}_0, \VEC{q}_1, \dots$ correctly. The mirror normal $\VEC{n}_i$ has two degrees of freedom and can be linearly estimated by collecting two or more linear constraints on it. In the case of $N_{\pi}=2$, the mirror normal $\VEC{n}_1$ can be estimated as the null space of the coefficient matrix of the following system defined by the kaleidoscopic projection constraint (Eq. \eqref{eq:kaleidoscopic_projection_constraint}) using $\{\PAIR{\VEC{q}_0}{\VEC{q}_1}, \PAIR{\VEC{q}_2}{\VEC{q}_{12}}\}$ in Figure \ref{fig:base_structure} (a):
\begin{equation}
 \begin{bmatrix} y_0 - y_1 & x_0 - x_1 & x_0 y_1 - x_1 y_0 \\ y_2 - y_{12} &
 x_{12} - x_2 & x_0 y_{12} - x_{12}y_0 \\ \end{bmatrix} \VEC{n}_1
 = \VEC{0}_{3{\times}1}, \label{eq:kpc_use}
\end{equation}
where $\PAIR{\VEC{q}}{\VEC{q}'}$ denotes a \textit{doublet}\cite{reshetouski13discovering}, \ie, the pair of projections $\VEC{q}$ and $\VEC{q}'$ for Eq. \eqref{eq:kaleidoscopic_projection_constraint}.

\begin{figure}[t]
\centering \includegraphics[width=0.9\linewidth]{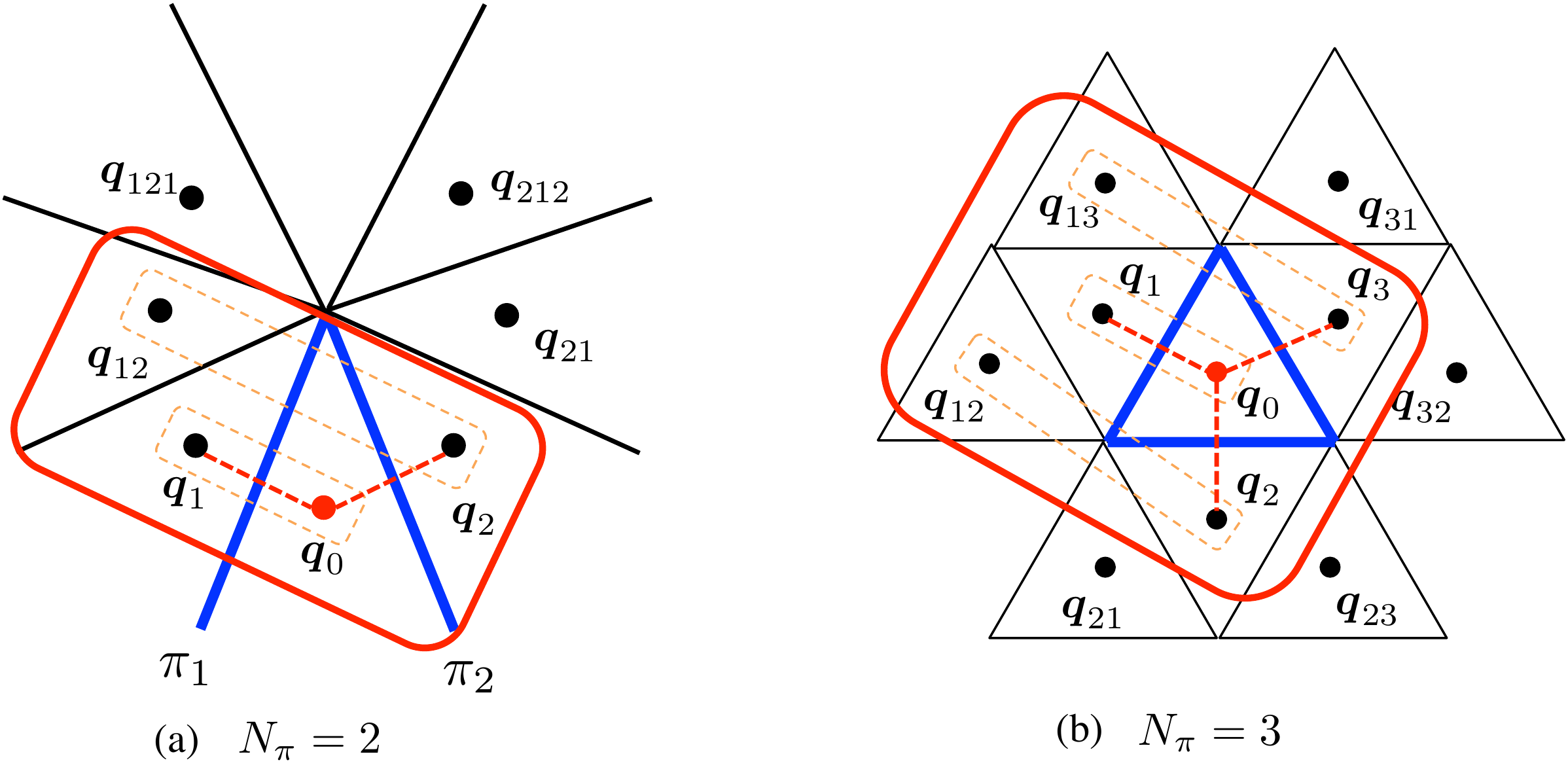}
\caption{The red boxes show examples of base structures in the case of (a) $N_{\pi} = 2$ and (b) $N_{\pi} = 3$. The red point indicates the point assumed as the base chamber and the dotted boxes indicate doublets. The red dotted lines indicate the reflection pairs, and the blue lines indicate the discovered mirrors.}
\label{fig:base_structure}
\end{figure}

\begin{figure}[t]
\centering \includegraphics[width=0.8\linewidth]{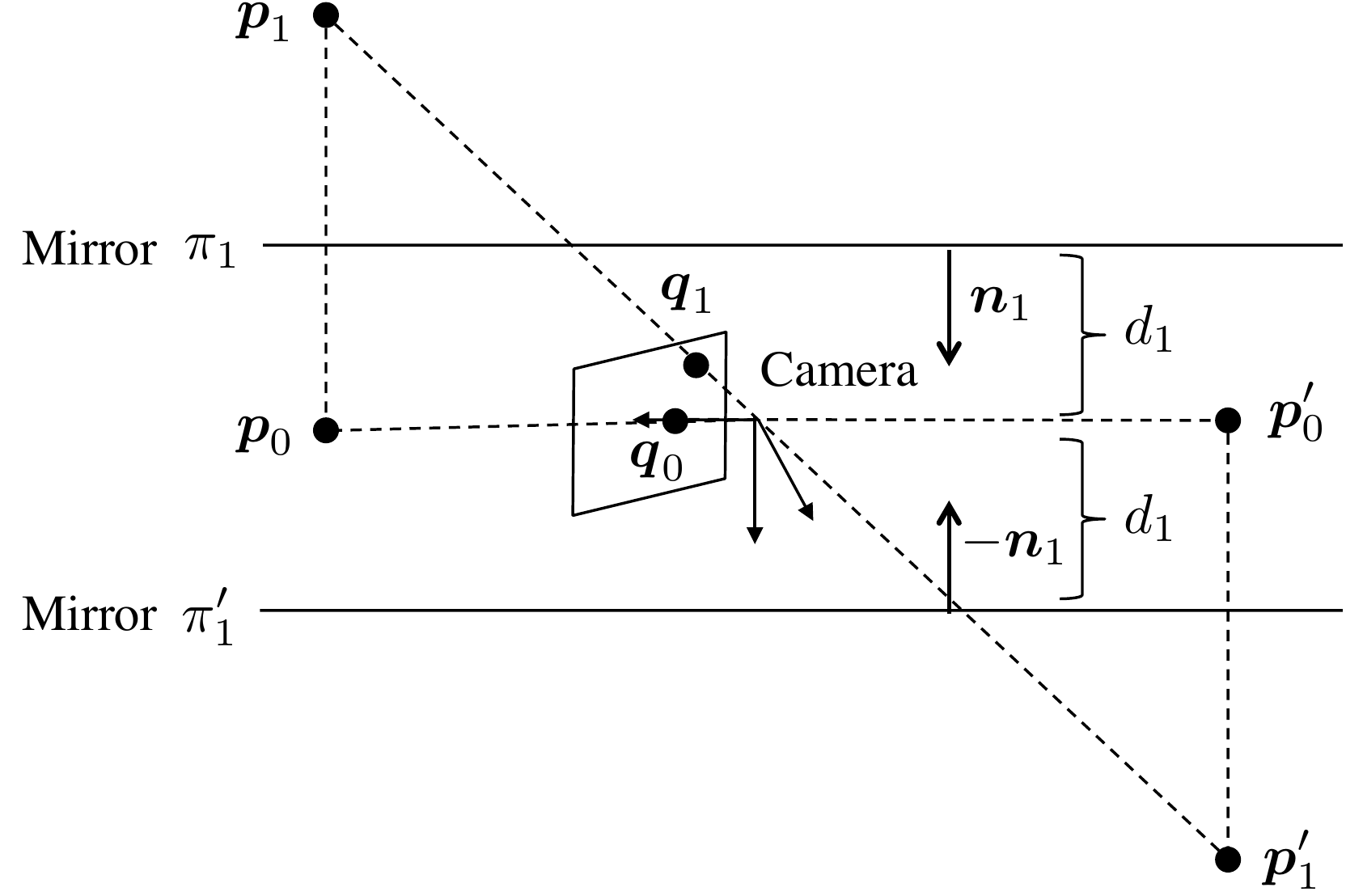}
\caption{Sign ambiguity of $\VEC{n}$ and corresponding triangulations. $\VEC{p}_i (i=0,1)$ and $\VEC{p}'_i$ are estimated from possible mirror parameters of $\pi_1$ and $\pi_1'$, \ie, $(\VEC{n}_1, d_1)$ and $(-\VEC{n}_1, d_1)$, respectively. Both of these mirror parameters satisfy Eq. \eqref{eq:planar_constraint}, and $\VEC{p}'_i$ appears as $- \VEC{p}_i$.}
\label{fig:sign_ambiguity}
\end{figure}

By using the estimated $\VEC{n}_1$ and assuming $d_1=1$ without loss of generality, the 3D point $\VEC{p}_1$ can be described as $\tilde{\VEC{p}}_1 = S_1\tilde{\VEC{p}}_0$ by Eq. \eqref{eq:householder}. By substituting $\VEC{p}_0$ and $\VEC{p}_1$ in this equation by using $\VEC{q}_0$ and $\VEC{q}_1$ as expressed in Eq. \eqref{eq:projection}, the 3D point $\VEC{p}_0$ and $\VEC{p}_1$ can be triangulated by solving the following linear system for $\lambda_0$ and $\lambda_1$:
\begin{align}
 & \tilde{\VEC{p}}_0 = S_1\tilde{\VEC{p}}_1, \\
 \Leftrightarrow & \lambda_0 A^{-1} \VEC{q}_0 = H_1 \lambda_1 A^{-1} \VEC{q}_1 -2\VEC{n}_1, \\
 \Leftrightarrow &
 \begin{bmatrix}
 H_1 A^{-1}\VEC{q}_0 & -A^{-1}\VEC{q}_1 \\
 \end{bmatrix}
 \begin{bmatrix}
 \lambda_0\\
 \lambda_1
 \end{bmatrix}
 = 2\VEC{n}_1. \label{eq:triangulation}
\end{align}

Note that $\VEC{n}_1$ can be determined up to sign from Eq. \eqref{eq:kpc_use}. As illustrated in Figure \ref{fig:sign_ambiguity}, both mirror parameters, $(\VEC{n}_1, d_1)$ and $(-\VEC{n}_1, d_1)$, are possible configurations in terms of Eq. \eqref{eq:planar_constraint}, but one of them triangulates $\VEC{p}_i (i=0,1)$ behind the camera. As a result, by rejecting this configuration, we obtain a unique mirror parameter.

Similarly the 3D points $\VEC{p}_2$ and $\VEC{p}_{12}$ can be triangulated by solving the linear system for $\lambda_2$ and $\lambda_{12}$. Because $\VEC{p}_2$ is the reflection of $\VEC{p}_0$ by the mirror $\pi_2$, the mirror normal $\VEC{n}_2$ as well as the distance $d_2$ can be estimated as
\begin{equation}
 \VEC{n}_2 = \frac{\VEC{p}_0 - \VEC{p}_2}{|\VEC{p}_0 - \VEC{p}_2|}, \quad\quad
 d_2 = - \VEC{n}_2^{\top}\frac{\VEC{p}_0 + \VEC{p}_2}{2}.
 \label{eq:mirror_parameters_from_3D_points}
\end{equation}

This doublets pair $\{\PAIR{\VEC{q}_0}{\VEC{q}_1}, \PAIR{\VEC{q}_2}{\VEC{q}_{12}}\}$ is a minimal configuration for linear estimation of the real mirror parameters in the $N_{\pi}=2$ case, and we call this minimal configuration a {\it base structure} of our chamber assignment. Note that the above doublet pair is not the unique base structure. That is, $\{\PAIR{\VEC{q}_0}{\VEC{q}_2}, \PAIR{\VEC{q}_1}{\VEC{q}_{21}}\}$ is also a base structure for $N_{\pi}=2$ case.

This procedure can be generalized intuitively as an $N_{\pi} (N_{\pi} > 2)$ pair method for the $N_{\pi}$ mirror system as follows. In the case of $N_{\pi}$ mirrors, we can observe a base structure that consists of $N_{\pi}$-tuple of doublets between a 3D point and its reflection by a mirror $\pi_j$ up to the second reflections as $\PAIR{\VEC{q}_0}{\VEC{q}_j}$ and $\PAIR{\VEC{q}_i}{\VEC{q}_{ji}} \: (1 \le j \le N_{\pi} \: i=1,\dots,j-1,j+1,\dots,N_{\pi})$. By using this base structure, $\VEC{n}_j$ can be estimated first, and then $\VEC{p}_i \: (i=0,\dots,N_{\pi})$ as described for the two-mirror case. As a result, all the mirror normals and the distances can be estimated by assuming $d_1=1$.

For example, a base structure of the $N_{\pi} = 3$ system illustrated in Figure \ref{fig:base_structure}(b) is 3-tuple of doublets \{$\PAIR{\VEC{q}_0}{\VEC{q}_1}$, $\PAIR{\VEC{q}_2}{\VEC{q}_{12}}$, $\PAIR{\VEC{q}_3}{\VEC{q}_{13}}$\}. This can be seen as a 6-point algorithm by considering up to the second reflections.

\subsection{Geometric constraints on kaleidoscopic projections}
\begin{figure}[t]
\centering \includegraphics[width=0.7\linewidth]{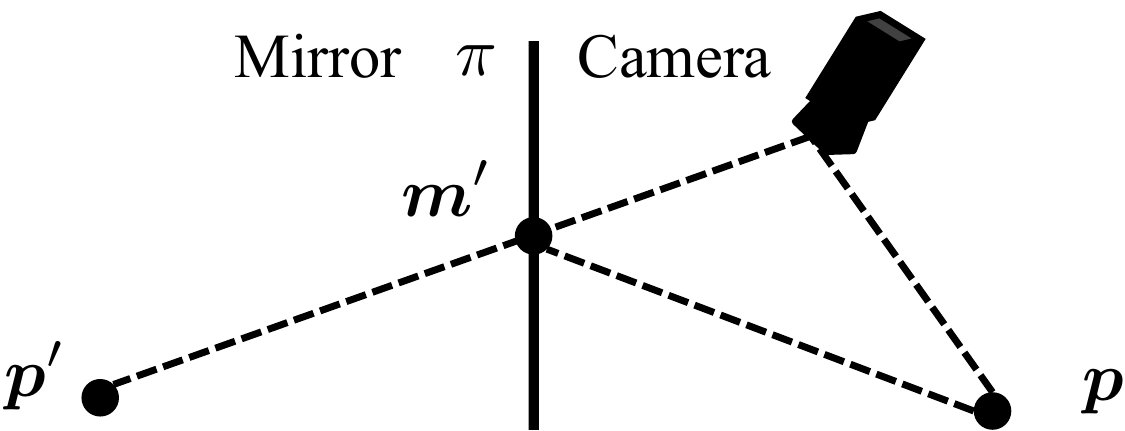}
\caption{The 3D point $\VEC{p}$ is closer to the camera than its reflection $\VEC{p}'$ from triangle inequality.}
\label{fig:lemma_distance}
\end{figure}

\begin{figure}[t]
\centering
\includegraphics[width=0.9\linewidth]{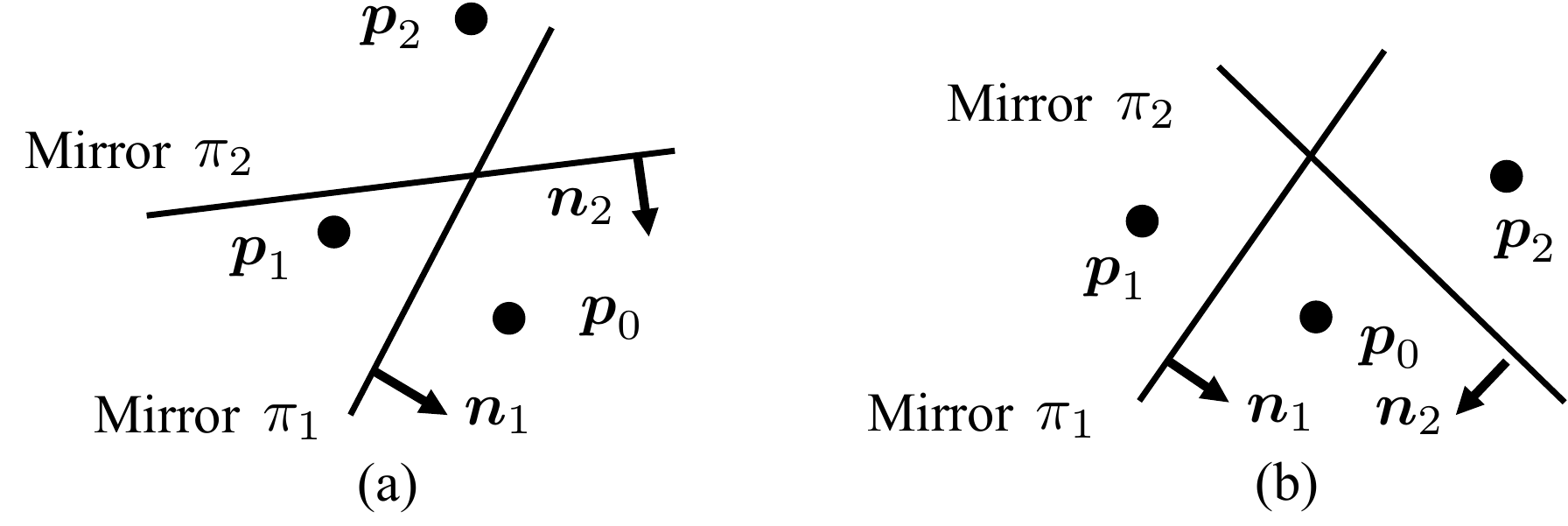}
\caption{To obtain second reflections, the mirrors should be facing each other, $\VEC{n}_i^{\top}\VEC{n}_j < 0$ as in (b). In $\VEC{n}_i^{\top}\VEC{n}_j \ge 0$ cases as in (a), the first reflection $\VEC{p}_2$ on $\pi_2$ is reflected behind mirror $\pi_1$ and cannot be reflected by $\pi_1$ as the second reflection.}
\label{fig:lemma_inverse_reflection}
\end{figure}

The procedure in Section \ref{sec:base_structure} assumes the selected points are correctly labeled as $\VEC{q}_0, \VEC{q}_1, \dots$. This section provides geometric constraints that evaluate the correctness of the hypothesized labeling.

In the mirror parameter estimation in Section \ref{sec:base_structure}, the smallest singular value computed on solving Eq. \eqref{eq:kpc_use} for $\VEC{n}_1$ indicates the feasibility of interpreting the set of $N_{\pi}$ doublets as a base structure:
\begin{equation}
 E = \frac{|e_{N_m}|}{\sum_{i=1,\cdot,N_m} |e_i|},\label{eq:pruning}
\end{equation}
where $e_i$ is the $i$th largest singular value of the coefficient matrix in Eq. \eqref{eq:kpc_use}. That is, once we select $2N_{\pi}$ projections in the image and hypothesize $N_{\pi}$ doublets correctly as a base structure, this should be zero in ideal conditions without noise. If the hypothesis of doublets is not appropriate, this should not be zero. This can be utilized as a geometric condition for removing inadequate hypotheses.

However, $E = 0$ is not a sufficient constraint to conclude the hypothesized base structure is physically feasible in terms of a kaleidoscopic projection. In addition to this condition, the hypothesized base structure should satisfy the following two propositions inspired by Reshetouski and Ihrke \cite{reshetouski13discovering}.
\begin{prop}
The 3D point $\VEC{p}_0$ projected to $\VEC{q}_0$ in the base chamber is the closest point to the camera among its reflections.
\end{prop}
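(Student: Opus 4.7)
The plan is to prove the inequality $|\VEC{p}_0| \le |\VEC{p}_x|$ for every reflection $\VEC{p}_x$ by reducing it to a triangle-inequality argument on the physical light path, as hinted by Figure \ref{fig:lemma_distance}. Since the camera frame is chosen as the world frame, the camera center sits at the origin $\VEC{0}$, so the distance to the camera is just the Euclidean norm $|\cdot|$ of the 3D coordinate.

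First I would dispose of the single-reflection case $\VEC{p}_i = S_i \VEC{p}_0$. Let $\VEC{m} \in \pi_i$ be the point at which the physical light ray from $\VEC{p}_0$ bounces on its way to the camera. By construction of the virtual image (the law of reflection), $\VEC{m}$ lies on the segment from $\VEC{0}$ to $\VEC{p}_i$ and is equidistant from the real point and its reflection, $|\VEC{p}_0 - \VEC{m}| = |\VEC{p}_i - \VEC{m}|$, so
\begin{equation*}
|\VEC{p}_i| = |\VEC{p}_i - \VEC{m}| + |\VEC{m}| = |\VEC{p}_0 - \VEC{m}| + |\VEC{m}|.
\end{equation*}
Applying the triangle inequality to the triangle with vertices $\VEC{0}, \VEC{m}, \VEC{p}_0$ yields $|\VEC{p}_0| \le |\VEC{p}_0 - \VEC{m}| + |\VEC{m}| = |\VEC{p}_i|$, with equality only in the degenerate case in which $\VEC{0}$, $\VEC{m}$, $\VEC{p}_0$ are colinear.

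For higher-order reflections I would induct on the reflection depth using the decomposition $\VEC{p}_{i_1 i_2 \cdots i_k} = S_{i_1} \VEC{p}_{i_2 \cdots i_k}$ recalled from Eq.~\eqref{eq:hijhihj}. This interprets the $k$-th order virtual point as the single-mirror reflection of the $(k{-}1)$-th order virtual point by $\pi_{i_1}$. The previous paragraph, applied with $\VEC{p}_{i_2 \cdots i_k}$ in place of $\VEC{p}_0$ and $\pi_{i_1}$ in place of $\pi_i$, then gives $|\VEC{p}_{i_2 \cdots i_k}| \le |\VEC{p}_{i_1 i_2 \cdots i_k}|$; composing with the inductive hypothesis $|\VEC{p}_0| \le |\VEC{p}_{i_2 \cdots i_k}|$ closes the induction.

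The only subtle point, and the one I would spell out most carefully, is justifying that the bounce point $\VEC{m}$ actually lies on the segment between $\VEC{0}$ and $\VEC{p}_i$ rather than on its extension past the mirror; this is what makes the equality $|\VEC{p}_i| = |\VEC{p}_i - \VEC{m}| + |\VEC{m}|$ legitimate. This requires the mirror to sit strictly between the camera and the virtual image, which is exactly what is encoded by our standing conventions $d_i > 0$ with $\VEC{n}_i$ oriented toward the camera, together with the assumption that the reflection is actually observed by the camera. Once this is established, the rest of the proof is a two-line application of the triangle inequality, iterated through the reflection tree.
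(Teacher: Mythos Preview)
Your proposal is correct and follows essentially the same approach as the paper: both use the triangle inequality with the reflection point $\VEC{m}$ on the mirror to get $|\VEC{p}| \le |\VEC{p}'|$ for a single bounce, and then iterate this through the chain of reflections $|\VEC{p}_0| < |\VEC{p}_i| < |\VEC{p}_{ji}| < \cdots$ via the decomposition $\VEC{p}_{i_1 \cdots i_k} = S_{i_1}\VEC{p}_{i_2 \cdots i_k}$. Your write-up is actually more careful than the paper's in spelling out the inductive step and the side-condition that $\VEC{m}$ lies on the segment $[\VEC{0},\VEC{p}_i]$ (which the paper leaves implicit in its figure reference).
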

\begin{proof}
As illustrated in Figure \ref{fig:lemma_distance}, the distance to the reflection $\VEC{p}'$ of $\VEC{p}$ is identical to the distance to the point of reflection $\VEC{m}'$ and the distance from $\VEC{m}'$ to $\VEC{p}$. From triangle inequality, $\VEC{p}$ is closer to the camera than $\VEC{p}'$, \ie, $|\VEC{p}| < |\VEC{p}_i|$.

By considering this single reflection for the case of $\VEC{p}_0$ and $\VEC{p}_i$ with the original camera, the case of $\VEC{p}_i$ and $\VEC{p}_{ji}$ with the virtual camera reflected by the mirror $\pi_j$, and so forth, we have $|\VEC{p}_0| < |\VEC{p}_i| < |\VEC{p}_{ji}| < \cdots$.
\end{proof}

\begin{prop}
The mirror normals should satisfy $\VEC{n}_i^{\top}\VEC{n}_j < 0 \: (i \neq j)$.
\end{prop}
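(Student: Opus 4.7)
The plan is to prove the contrapositive: assume $\VEC{n}_i^\top \VEC{n}_j \geq 0$ and derive that the first reflection $\VEC{p}_j = S_j \VEC{p}_0$ cannot be further reflected by $\pi_i$, contradicting the observability of the second reflection $\VEC{p}_{ij}$. Since $\pi_i$ only reflects points lying on its camera-facing side, the condition that must fail is $\VEC{n}_i^\top \VEC{p}_j + d_i > 0$. This is precisely the geometric situation contrasted in Figure~\ref{fig:lemma_inverse_reflection}(a) versus (b), and it reduces the proposition to a one-line algebraic check once the reflection formula \eqref{eq:householder} is plugged in.

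Next I would substitute $\VEC{p}_j = \VEC{p}_0 - 2 t_j \VEC{n}_j$, where $t_j := \VEC{n}_j^\top \VEC{p}_0 + d_j > 0$ because $\VEC{p}_0$ itself must lie on the reflective side of $\pi_j$ for even the first reflection to exist. Writing $t_i := \VEC{n}_i^\top \VEC{p}_0 + d_i > 0$ for the analogous signed distance of $\VEC{p}_0$ from $\pi_i$, a direct calculation gives
\begin{equation*}
  \VEC{n}_i^\top \VEC{p}_j + d_i \;=\; t_i - 2 t_j \, \VEC{n}_i^\top \VEC{n}_j .
\end{equation*}
When $\VEC{n}_i^\top \VEC{n}_j < 0$ the right-hand side is a sum of two positive terms, so the second reflection is guaranteed; when $\VEC{n}_i^\top \VEC{n}_j \geq 0$ the subtracted term is non-negative and can dominate $t_i$ for sources close to $\pi_i$ inside the base chamber, so $\VEC{p}_j$ migrates behind $\pi_i$ exactly as Figure~\ref{fig:lemma_inverse_reflection}(a) depicts.

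I expect the main obstacle to be formalizing the last step, because for a single fixed $\VEC{p}_0$ the displayed inequality only yields the weaker bound $\VEC{n}_i^\top \VEC{n}_j < t_i / (2 t_j)$. The cleanest way to close the gap is to exploit that the base chamber $L_0$ is the intersection of the reflective half-spaces of all real mirrors and therefore accumulates at $\pi_i$, so that $t_i$ may be driven to $0^+$ while $t_j$ stays bounded away from zero; requiring $\VEC{p}_{ij}$ to remain observable over this limit forces $\VEC{n}_i^\top \VEC{n}_j \leq 0$, and the strict inequality then follows from the standing assumption that no two mirrors are parallel.
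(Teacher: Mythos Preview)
Your approach matches the paper's: both argue that observability of the second reflection $\VEC{p}_{ij}$ forces the mirrors to face each other, and both point to the same geometric picture (Figure~\ref{fig:lemma_inverse_reflection}). The paper's own proof is essentially one sentence plus the figure; your algebraic expansion with $t_i, t_j$ is a legitimate and more explicit rendering of the same idea, and your observation that a single fixed $\VEC{p}_0$ only yields $\VEC{n}_i^\top\VEC{n}_j < t_i/(2t_j)$ is correct and sharper than what the paper states.

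However, your closing step contains a genuine error. You write that the strict inequality $\VEC{n}_i^\top\VEC{n}_j < 0$ (rather than $\leq 0$) ``follows from the standing assumption that no two mirrors are parallel.'' But $\VEC{n}_i^\top\VEC{n}_j = 0$ means the mirrors are \emph{perpendicular}, not parallel; parallel mirrors correspond to $\VEC{n}_i^\top\VEC{n}_j = \pm 1$. So the non-parallel hypothesis gives you nothing here. In fact, your own displayed identity shows that when $\VEC{n}_i^\top\VEC{n}_j = 0$ one has $\VEC{n}_i^\top\VEC{p}_j + d_i = t_i > 0$, so $\VEC{p}_j$ stays on the reflective side of $\pi_i$ and the observability-of-$\VEC{p}_{ij}$ argument does not exclude the perpendicular case. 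If you want to recover the strict inequality you need a different argument (or to accept, as the paper implicitly does, that the proposition is a heuristic design constraint rather than a sharp necessary condition derived from a single observed point).
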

\begin{proof}
The mirror parameters estimated in Section \ref{sec:base_structure} require a projection of a second reflection. As illustrated in Figure \ref{fig:lemma_inverse_reflection}, this is identical to guarantee that the mirrors are facing each other: \begin{equation} \VEC{n}_i^{\top}\VEC{n}_j < 0. \end{equation}
\end{proof}

These propositions are based on the same ideas described in Reshetouski and Ihrke \cite{reshetouski13discovering} and can be recognized as the extended version of them. Proposition 1 rejects configurations as illustrated in Figure \ref{fig:error_example_Nm_2_2D}(a) where $\VEC{q}_1$ is wrongly interpreted as the base chamber. Note that proposition 1 is only true for points that are not on the mirror. Proposition 2 rejects configurations as illustrated in Figure \ref{fig:error_example_Nm_2_2D}(b) where $\VEC{q}_0$ is correctly interpreted as the base chamber but $\pi_2$ reflects $\VEC{p}_{212}$ behind $\pi_1$. 

One of the key contributions of this paper in the chamber assignment problem is to implement such geometric constraints in a 3D space by utilizing the kaleidoscopic projection constraint, while Reshetouski \etal \cite{reshetouski13discovering} implement them in a 2D space. As a result, their 2D formulation requires the mirrors to be orthogonal to a ground plane, while our 3D formulation does not.

\begin{figure}[t]
\centering
\includegraphics[width=0.9\linewidth]{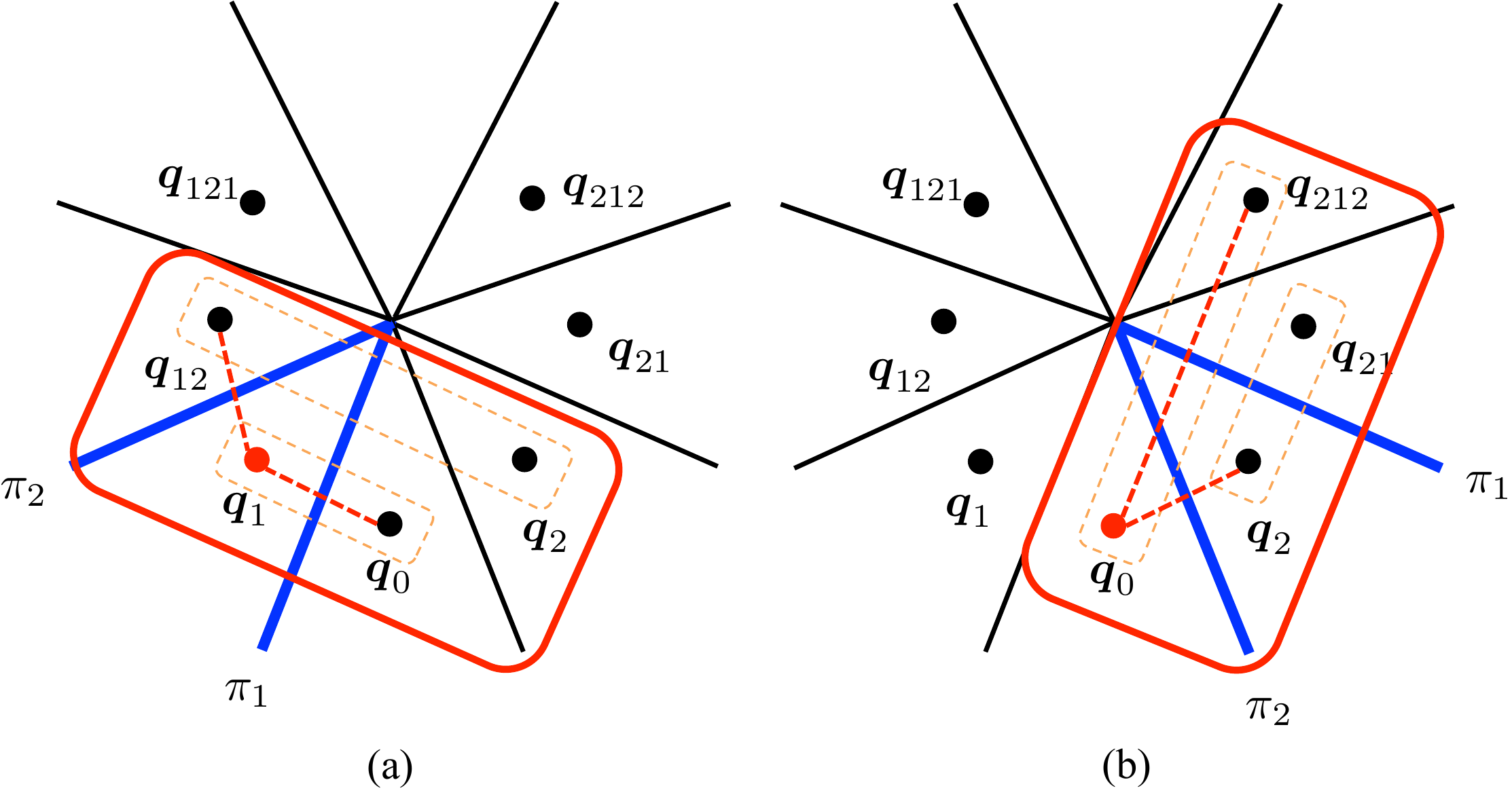}
\caption{Configurations satisfying Eq. \eqref{eq:kpc_use} but not (a) Proposition 1 or (b) Proposition 2. The color codes are identical to Figure \ref{fig:kaleidoscopic_image}.}
\label{fig:error_example_Nm_2_2D}
\end{figure}

\begin{algorithm*}[t]
  \caption{Chamber assignment algorithm}
  \label{alg:overview}
  \begin{algorithmic}
    \REQUIRE $\VEC{r}_i (i = 0, 1, \cdots, |R|)$
    \ENSURE $\mathrm{\Lambda}_\mathrm{out} = \{ L_{\VEC{r}_0}, L_{\VEC{r}_1}, \dots, L_{\VEC{r}_{|R|}} \}$
    \FORALL{base structure candidates}
    \STATE{compute mirror parameters $\VEC{n}_j, d_j (j = 1, \cdots, N_{\pi})$ by solving Eq \eqref{eq:kpc_use} and Eq \eqref{eq:mirror_parameters_from_3D_points}.}
    \IF{Eq. \eqref{eq:pruning} $\neq$ 0}
    \STATE{continue;}
    \ENDIF
    \IF{${}^\exists i, |\VEC{p}_0| > |\VEC{p}_{i}|$}
    \STATE{continue; \# Proposition 1}
    \ENDIF
    \IF{$\VEC{n}^{\top}_i\VEC{n}_j > 0$}
    \STATE{continue; \# Proposition 2}
    \ENDIF
    \STATE{assign chamber labels for each $\VEC{r}_i$ as described in Section \ref{sec:assignment} and obtain label set $\mathrm{\Lambda}_r$.}
    \STATE{compute the recall score $\mathcal{R}$ of $\mathrm{\Lambda}_r$ by Eq. \eqref{eq:recall}.}
    \ENDFOR
    \STATE{select the label set with a highest recall score as $\mathrm{\Lambda}_\mathrm{out}$.}
  \end{algorithmic}
\end{algorithm*}

\subsection{Discontinuity-aware label propagation}\label{sec:assignment_algorithm}
Suppose a hypothesized base structure satisfies the above two propositions. This section introduces an algorithm that propagates the labeling to the projections not involved in the hypothesized base structure by synthesizing projections of all possible reflected points.

Given the mirror parameters $\VEC{n}_i, d_i \: (i = 1,\dots,N_{\pi})$ and the triangulated 3D point $\VEC{p}_0$, the $k$th reflection and its projection can be computed by Eqs. \eqref{eq:householder} and \eqref{eq:projection}. However, if such reflection is projected outside of the corresponding chamber, it is not observable from the camera and cannot generate further reflections. This is called \textit{discontinuity}\cite{reshetouski11three}.

Such discontinuity can simply be intuitively detected by examining the chamber label at the projected pixel, but this requires knowing the pixel-wise chamber labeling that is not available up to this point. Instead, we introduce another detection approach based on 3D geometry.

\subsubsection{Discontinuity detection}
Consider a multiply-reflected 3D point $\tilde{\VEC{p}}_{i_{k} \dots i_1} = \Pi_{j=i_1, \dots, i_k} S_j \tilde{\VEC{p}}_0$. As pointed out in Reshetouski \etal \cite{reshetouski11three}, if this is visible from the camera, the ray from the camera center to $\VEC{p}_{i_{k} \dots i_1}$ should intersect with the mirror of the first reflection $\pi_{i_1}$ as illustrated in Figure \ref{fig:discontinuity}.

Let $\ell$ denote the ray to the target point. Since the system has $N_{\pi}$ mirror planes, the above condition can be evaluated by computing the $N_{\pi}$ intersections between each of the planes and $\ell$, and by testing, if the intersection with the mirror $\pi_{i_1}$ in question is the closest among the intersections in front of the camera.

\subsubsection{Label propagation for a hypothesized base structure}\label{sec:assignment}
Suppose projections of the visible reflections by considering the discontinuity are synthesized as $\hat{Q} = \{ \hat{\VEC{q}}_{i_k \dots i_1} \} \: (k \ge 1, 1 \le i_x \le N_{\pi})$. 

The goal of the label propagation is to find correspondences between the synthesized point set $\hat{Q}$ and the observed point set $R = \{\VEC{r}_i\}$ as a sort of bipartite matching. Suppose the matching cost to associate $\hat{\VEC{q}}_{i_k \dots i_1}$ and $\VEC{r}_i$ is modeled by the 2D distance between them. The matching should minimize the total matching cost
\begin{equation}
 \mathcal{E} = \sum_{\hat{\VEC{q}}_{i_k \dots i_1} \in \hat{Q}} ||\hat{\VEC{q}}_{i_k \dots i_1} - \hat{\VEC{r}}_{i_k \dots i_1}||,\label{eq:label_propagation}
\end{equation}
where $\hat{\VEC{r}}_{i_k \dots i_1} \in R$ is the point selected as the corresponding point of $\hat{\VEC{q}}_{i_k \dots i_1}$ by assigning label $L_{i_k \dots i_1}$. 

Since solving this combination optimization for each of the trials in our analysis-by-synthesis is computationally expensive, we approximated this process by the nearest neighbor search that simply assigns the nearest candidate. Note that we ignored the assignment of $\hat{\VEC{q}}_{i_k \dots i_1}$ whose distance is larger than a threshold in computing Eq. \eqref{eq:label_propagation}. As a result, doublets can share a candidate point in $R$ by multiple synthesized points in $\hat{Q}$, but we found this approximation is acceptable to some extent because of the sparse distribution of the points in $R$ and $\hat{Q}$. This point is discussed later in Section \ref{sec:discussion_chamber_assignment}.

On the basis of this assignment, we can introduce a recall ratio $\mathcal{R}$ that measures how many of the synthesized projections that are supposed to be visible have been assigned detected points:
\begin{equation}
 \mathcal{R} = \frac{|R_c|}{|\hat{Q}|}, \label{eq:recall}
\end{equation}
where $R_c \subseteq R$ is the set of detected points assigned labels, $|\hat{Q}|$ and $|R_c|$ are the sizes of the set $\hat{Q}$ and $R_c$, respectively.

\begin{figure}[t]
\centering
\includegraphics[width=\linewidth]{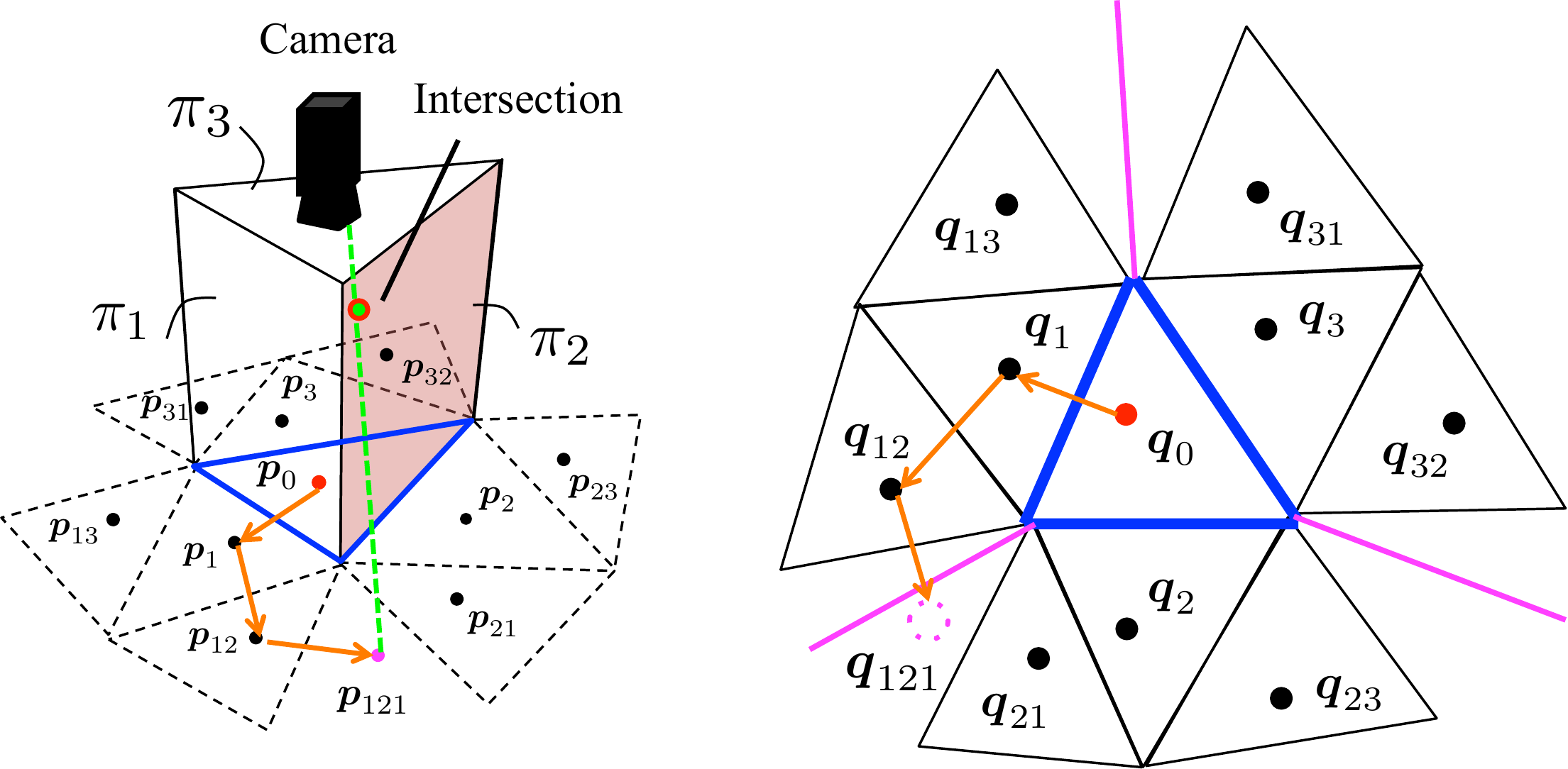}
\caption{Discontinuity. The third reflection $\VEC{p}_{121}$ is not visible from the camera since the viewing ray (the green dotted line in the left image) intersects with not the mirror $\pi_1$ but $\pi_2$. The boundary of such visibility appears as the discontinuity boundary in the image (the magenta lines in the right image)\cite{reshetouski11three}.}
\label{fig:discontinuity}
\end{figure}

\subsection{Chamber assignment algorithm}\label{sec:algrithm}
Algorithm \ref{alg:overview} shows the flow of the proposed chamber assignment algorithm. For each possible base structure, it examines if it satisfies the kaleidoscopic projection constraint expressed by Eq. \eqref{eq:kpc_use}, the base chamber constraint by Proposition 1, and the mirror angle constraint by Proposition 2. Once the base structure passes these verifications, its recall ratio $\mathcal{R}$ is computed by Eq. \eqref{eq:recall}. Finally, the best estimate of the chamber assignment is returned by finding the base structure of the highest recall ratio.

Note that this algorithm evaluates all possible combinations to thoroughly check the behavior of the proposed geometric constraints in Section \ref{sec:base_structure}. This approach is inspired by Reshetouski and Ihrke \cite{reshetouski13discovering}. Possible efficient implementations and effects of the above pruning are discussed in Sections \ref{sec:discussion_ransac} and \ref{sec:discussion_pruning}, respectively.

\section{Estimation of mirror parameters}\label{sec:section_6}
This section introduces a novel algorithm of mirror parameters estimation given the chamber assignment for kaleidoscopic projections of a single 3D point. 

Although the algorithm in Section \ref{sec:base_structure} can also estimate the mirror parameters, it is a per-mirror estimation and is not guaranteed to estimate mirror parameters consistent with projections of higher-order reflections. Instead of such mirror-wise estimations, this section proposes a new linear algorithm that calibrates the kaleidoscopic mirror parameters simultaneously by observing a single 3D point in the scene.

Note that the algorithm is first introduced by utilizing up to the second reflections but can be extended to third or further reflections intuitively as described later.

\subsection{Mirror normals}\label{sec:normal}
As illustrated in Figure \ref{fig:corr_proposed} (a), suppose a 3D point $\VEC{p}_0$ is projected to $\VEC{q}_0$ in the base chamber, and its mirror $\VEC{p}_i$ by $\pi_i$ is projected to $\VEC{q}_i$ in the chamber $L_i$. Likewise, the second mirror $\VEC{p}_{ij}$ by $\pi_{ij}$ is projected to $\VEC{q}_{ij}$ in the chamber $L_{ij}$, and so forth. Here, kaleidoscopic projection constraints are satisfied by two pairs of projections on each mirror $\pi_1$ and $\pi_2$. From these constraints, $\VEC{n}_1$ and $\VEC{n}_2$ can be estimated by solving
\begin{equation}
    \begin{bmatrix}
    y_0 - y_1 & x_1 - x_0 & x_0 y_1 - x_1 y_0 \\
    y_2 - y_{12} & x_{12} - x_2 & x_2 y_{12} - x_{12} y_2 \\
    \end{bmatrix}
    \VEC{n}_1 = \VEC{0}_{3{\times}1}. \label{eq:n1}
\end{equation}
and
\begin{equation}
    \begin{bmatrix}
    y_0 - y_2 & x_2 - x_0 & x_0 y_2 - x_2 y_0 \\
    y_2 - y_{21} & x_{21} - x_1 & x_1 y_{21} - x_{21} y_1 \\
    \end{bmatrix}
    \VEC{n}_2 = \VEC{0}_{3{\times}1}. \label{eq:n2}
\end{equation}

An important observation in this simple algorithm is the fact that (1) this is a linear algorithm even though it utilizes multiple reflections, and (2) the estimated normals $\VEC{n}_1$ and $\VEC{n}_2$ are considered to be consistent with each other even though they are computed on a per-mirror basis apparently.

The first point is realized by using not the multiple reflections of a 3D position but their 2D projections. Intuitively, a reasonable formalization of kaleidoscopic projection is to define a real 3D point in the scene, and then to express that each of the projections of its reflections by Eq. \eqref{eq:householder} coincides with the observed 2D position as introduced in Section \ref{sec:ba} later. This expression, however, is nonlinear in the normals $\VEC{n}_i \: (i=1,2)$ (\eg, $\VEC{p}_{12} = S_1 S_2 \VEC{p}_0$). On the other hand, projections of such multiple reflections can be associated as a result of a single reflection by Eq. \eqref{eq:kaleidoscopic_projection_constraint_high_order} directly (\eg, $\VEC{n}_1$ with $\VEC{q}_{12}$ and $\VEC{q}_2$ as the projections of $\VEC{p}_{12}$ and $S_2\VEC{p}_0$, respectively). As a result, we can utilize 2D projections of multiple reflections in the linear systems above.

This explains the second point as well. The above constraint on $\VEC{q}_{12}$, $\VEC{q}_2$ and $\VEC{n}_1$ in Eq. \eqref{eq:n1} assumes $\VEC{p}_2 = S_2\VEC{p}_0$ being satisfied, and this assumption on the independent mirror normals $\VEC{n}_2$ appears in the second row of \eqref{eq:n1}. Similarly, on estimating $\VEC{n}_2$ by Eq. \eqref{eq:n2}, the assumption $\VEC{p}_1 = S_1\VEC{p}_0$ is implicitly considered in the second row of Eq. \eqref{eq:n2}.

Note that this algorithm can be extended to third or further reflections intuitively. For example, if $\VEC{p}_{21}$ and its reflection by $\pi_1$ is observable as $\lambda_{121}\VEC{q}_{121} = A \VEC{p}_{121} = A S_1 \VEC{p}_{21}$, then it provides
\begin{equation}
 \left( y_{21} - y_{121}, x_{121} - x_{21}, x_{21} y_{121} - x_{121} y_{21} \right) \VEC{n}_1 = 0,
\end{equation}
and can be integrated with Eq. \eqref{eq:n1}.

Also, this algorithm can be extended to $N_{\pi} \ge 3$ cases. In case of $N_{\pi}=3$, for example, we solve
\begin{equation}
    \begin{bmatrix}
    y_0 - y_1 & x_1 - x_0 & x_0 y_1 - x_1 y_0 \\
    y_2 - y_{12} & x_{12} - x_2 & x_2 y_{12} - x_{12} y_2 \\
    y_3 - y_{13} & x_{13} - x_3 & x_3 y_{13} - x_{13} y_3 \\
    \end{bmatrix}
    \VEC{n}_1 = \VEC{0}_{3{\times}1},
\end{equation}
\begin{equation}
    \begin{bmatrix}
    y_0 - y_2 & x_2 - x_0 & x_0 y_2 - x_2 y_0 \\
    y_3 - y_{23} & x_{23} - x_3 & x_3 y_{23} - x_{23} y_3 \\
    y_2 - y_{21} & x_{21} - x_1 & x_1 y_{21} - x_{21} y_1 \\
    \end{bmatrix}
    \VEC{n}_2 = \VEC{0}_{3{\times}1},
\end{equation}
and
\begin{equation}
    \begin{bmatrix}
    y_0 - y_3 & x_3 - x_0 & x_0 y_3 - x_3 y_0 \\
    y_1 - y_{31} & x_{31} - x_1 & x_1 y_{31} - x_{31} y_1 \\
    y_2 - y_{32} & x_{32} - x_2 & x_2 y_{32} - x_{32} y_2
    \end{bmatrix}
    \VEC{n}_3 = \VEC{0}_{3{\times}1}, \label{eq:n3}
\end{equation}
instead of Eqs. \eqref{eq:n1} and \eqref{eq:n2} from point correspondences in Figure \ref{fig:corr_proposed} (b).

\subsection{Mirror distances}
Once the mirror normals $\VEC{n}_1$ and $\VEC{n}_2$ are given linearly, the mirror distances $d_1$ and $d_2$ can also be estimated linearly as follows.

\begin{figure}[t]
\centering
\includegraphics[width=0.9\linewidth]{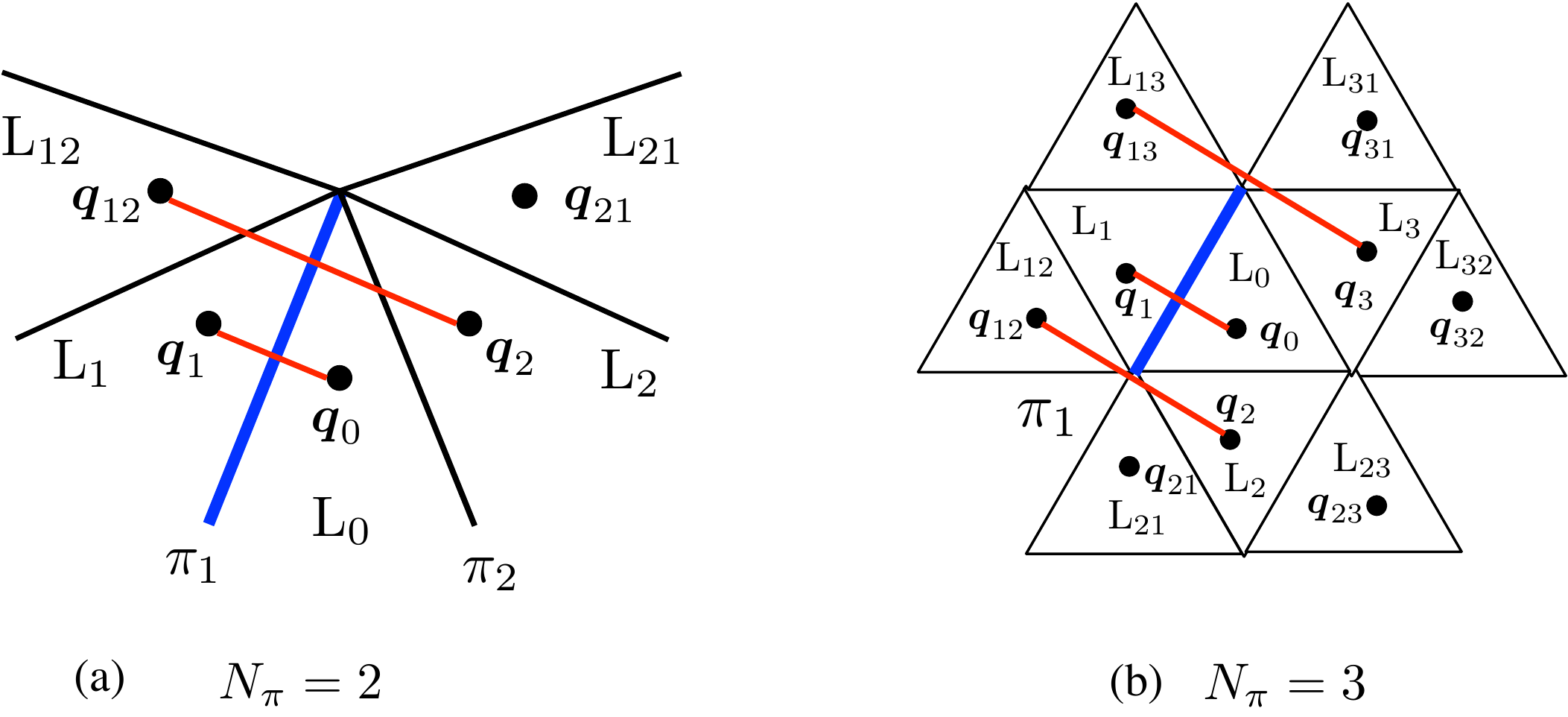}
\caption{Corresponding points in (a) $N_{\pi} = 2$ and (b) $N_{\pi} = 3$ case. (a) Two pairs $\langle\VEC{q}_0,\VEC{q}_1\rangle$ and $\langle\VEC{q}_2,\VEC{q}_{12}\rangle$ (red) are available on mirror $\pi_1$ (blue). (b) Three pairs $\langle\VEC{q}_0,\VEC{q}_1\rangle$,$\langle\VEC{q}_2,\VEC{q}_{12}\rangle$ and $\langle\VEC{q}_3,\VEC{q}_{13}\rangle$(red) are available on mirror $\pi_1$ (blue)}
\label{fig:corr_proposed}
\end{figure}

\subsubsection{Kaleidoscopic re-projection constraint}
The perspective projection Eq. \eqref{eq:projection} indicates that a 3D point $\VEC{p}_i$ and its projection $\VEC{q}_i$ should satisfy the collinearity constraint:
\begin{equation}
 (A^{-1} \VEC{q}_i) \times \VEC{p}_i = \VEC{x}_i \times \VEC{p}_i = \VEC{0}_{3{\times}1}, \label{eq:cross_i}
\end{equation}
where $\VEC{x}_i = \begin{pmatrix} x_i & y_i & 1 \end{pmatrix}^\top$ is the normalized camera coordinate of $\VEC{q}_i$ as defined in Section \ref{sec:section_4}. Since the mirrored points $\VEC{p}_i \: (i=1,2)$ are then given by Eq. \eqref{eq:householder} as
\begin{equation}
    \begin{split}
    \VEC{p}_i & 
    = H_i \VEC{p}_0 - 2 d_i \VEC{n}_i 
    ,
    \label{eq:pi}
    \end{split}
\end{equation}
we obtain
\begin{equation}
\begin{split}
    \VEC{x}_i \times \VEC{p}_i & =  \VEC{x}_i \times (H_i \VEC{p}_0 - 2 d_i \VEC{n}_i), \\
    & = [ \VEC{x}_i ]_\times 
    \begin{bmatrix}
    H_i
    &
    -2 \VEC{n}_i
    \end{bmatrix}
    \begin{bmatrix}
    \VEC{p}_0 \\
    d_i
    \end{bmatrix},\\
    & = \VEC{0}_{3{\times}1}.
\end{split}
\end{equation}

Similarly, the second reflection $\VEC{p}_{ij}$ is also collinear with its projection $\VEC{q}_{ij}$:
\begin{equation}
\begin{split}
    & (A^{-1}\VEC{q}_{ij}) \times \VEC{p}_{ij}, \\
    = & [ \VEC{x}_{ij} ]_\times (H_i \VEC{p}_{j} - 2 d_i \VEC{n}_i), \\
    = & [ \VEC{x}_{ij} ]_\times \left(H_i \left( H_j \VEC{p}_0 - 2 d_j \VEC{n}_j \right) -2 d_i  \VEC{n}_i\right), \\
    = &
    [ \VEC{x}_{ij} ]_\times
    \begin{bmatrix}
    H_i H_j & 
    - 2 \VEC{n}_i &
    - 2 H_i \VEC{n}_j
    \end{bmatrix}
    \begin{bmatrix}
    \VEC{p}_0 \\
    d_i \\
    d_j
    \end{bmatrix}, \\
    = & \VEC{0}_{3{\times}1}. \label{eq:cross_ij}
\end{split}
\end{equation}

By using these constraints, we obtain a linear system of $\VEC{p}_0$, $d_1$ and $d_2$:
\begin{equation}
\begin{split}
    &
    \begin{bmatrix}
        [\VEC{x}_0]_\times & \VEC{0}_{3{\times}1} & \VEC{0}_{3{\times}1} \\
        h_{1} & -2 [ \VEC{x}_{1} ]_\times \VEC{n}_1 & \VEC{0}_{3{\times}1} \\
        h_{2} & \VEC{0}_{3{\times}1} & -2 [ \VEC{x}_{2} ]_\times \VEC{n}_2\\
        h'_{1,2} & -2 [ \VEC{x}_{12} ]_\times \VEC{n}_1 & -2 h''_{1,2}\\
        h'_{2,1} & -2 h''_{2,1} & -2 [ \VEC{x}_{21} ]_\times \VEC{n}_2\\
    \end{bmatrix}
    \begin{bmatrix}
        \VEC{p}_0 \\
        d_1 \\
        d_2
    \end{bmatrix}, \\
    = & 
    K \begin{bmatrix}
        \VEC{p}_0 \\
        d_1 \\
        d_2
    \end{bmatrix} = \VEC{0}_{15{\times}1}, \label{eq:reproj}
\end{split}
\end{equation}
where $h_{i} = [ \VEC{x}_{i} ]_\times H_i$, $h'_{i,j} = [ \VEC{x}_{ij} ]_\times H_i H_j$, $h''_{i,j} = [ \VEC{x}_{ij} ]_\times H_i \VEC{n}_j$.  By computing the eigenvector corresponding to the smallest eigenvalue of $K^\top K$, $(\VEC{p}_0, d_1, d_2)^\top$ can be determined up to a scale factor. In this paper, we choose the scale that normalizes $d_1 = 1$.

Note that Eq. \eqref{eq:reproj} apparently has 15 equations, but only 10 of them are linearly independent. This is simply because each cross product by Eqs. \eqref{eq:cross_i} and \eqref{eq:cross_ij} has only two independent constraints by definition.

Also, as discussed in Section \ref{sec:normal}, the above algorithm can be extended to third or further reflections and $N_{\pi} \ge 3$ cases as well. In $N_{\pi}=3$, considering the reflection of $\VEC{p}_{23}$ by $\pi_1$ as $\lambda_{123}\VEC{q}_{123} = A \VEC{p}_{123} = A S_1 \VEC{p}_{23}$, we have
\begin{equation}
    [\VEC{x}_{123}]_\times
    \begin{bmatrix}
    (H_1 H_2 H_3)^\top \\
    -2\VEC{n}_1^\top \\
    -2 (H_1 \VEC{n}_2)^\top \\
    -2 (H_1 H_2 \VEC{n}_3)^\top
    \end{bmatrix}^\top
    \begin{bmatrix}
        \VEC{p}_0 \\
        d_1 \\
        d_2 \\
        d_3
    \end{bmatrix}
    = \VEC{0}_{3{\times}1}.
\end{equation}

\subsection{Kaleidoscopic bundle adjustment}\label{sec:ba}
The 3D position of $\VEC{p}_0$ can be estimated by solving Eq. \eqref{eq:reproj}. By reprojecting this $\VEC{p}_0$ to each of the chambers as
\begin{equation}
\begin{split}
    \lambda \hat{\VEC{q}}_0 & = A \VEC{p}_0, \\
    \lambda \hat{\VEC{q}}_{i} & = A S_i \VEC{p}_0 \: (i=1,2), \\
    \lambda \hat{\VEC{q}}_{i,j} & = A S_i S_j \VEC{p}_0 \: (i,j=1,2, \: i \neq j),
\end{split}
\end{equation}
we obtain a reprojection error as
\begin{equation}
  \VEC{E}(\VEC{p}_0, \VEC{n}_1, \VEC{n}_2, d_1, d_2)  = 
    \begin{bmatrix}
        \VEC{q}_0 - \hat{\VEC{q}}_0,
        \VEC{e}_1,
        \VEC{e}_2,
        \VEC{e}'_{1,2},
        \VEC{e}'_{2,1},
    \end{bmatrix}^\top,
\label{eq:reproj_error_vec}
\end{equation}
where $\VEC{e}_i = \VEC{q}_i - \hat{\VEC{q}}_i$ and $\VEC{e}'_{i,j} = \VEC{q}'_{i,j} - \hat{\VEC{q}}'_{i,j}$. By minimizing $|| \VEC{E}(\cdot) ||^2$ nonlinearly over $\VEC{p}_0, \VEC{n}_1, \VEC{n}_2, d_1, d_2$, we obtain a best estimate of the mirror normals and the distances.

Compared with the earlier version of this work \cite{takahashi17linear}, we parameterize the $\VEC{p}_0$ and minimize their reprojection errors explicitly by following the bundle adjustment manner presented in \cite{hartley00multiple}.

\section{Experiments}\label{sec:section_7}
This section provides evaluations of the proposed method in terms of chamber assignment and mirror parameter estimation with synthesized and real data.

\subsection{Chamber assignment}\label{sec:eval_chamber_assignment}
\subsubsection{Experimental environment}
The performance of the chamber labeling is evaluated in the following $N_{\pi} = 2$ and $N_{\pi} = 3$ scenarios (Figure \ref{fig:multi_mirror_system}).
\begin{enumerate}
\item two-mirror system using up to third reflections $\VEC{r}_i (i = 0, \cdots, 6)$ (Figure \ref{fig:simulation_setup}(a))
\item three-mirror system using up to second reflections $\VEC{r}_i (i = 0, \cdots, 9)$ (Figure \ref{fig:simulation_setup}(b))
\end{enumerate}
In the latter case with three mirrors, the mirrors are tilted at 5 degrees approximately in order to evaluate the performance with mirrors non-orthogonal to each other. In both scenarios, we used a camera of $1600\times1200$ resolution whose intrinsic matrix $A = [1000, 0, 800; 0, 1000, 600; 0, 0, 1]$.

The performance is evaluated by the accuracy of labeling defined by
\begin{equation}
 E_m = N_{m:\mathrm{correct}} / N_{m},
\end{equation}
where $N_{m}$ is the number of $m$th reflections and $N_{m:\mathrm{correct}}$ is the number of projections labeled correctly.

\begin{figure}[t]
\centering
\includegraphics[width=0.95\linewidth]{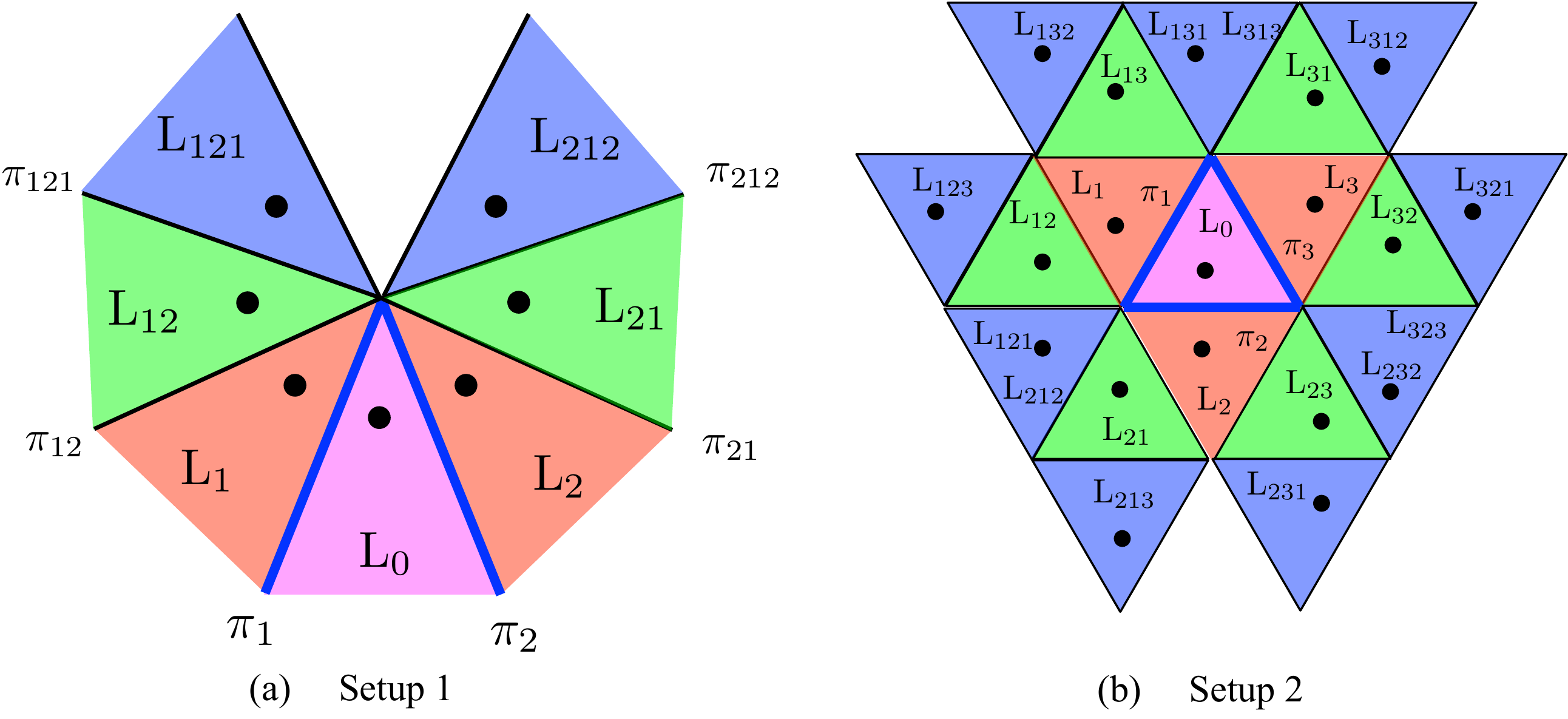}
\caption{Mirror configurations. The red, green and blue regions indicate the chambers corresponding to the first, second, and third reflections, respectively. The bold blue lines indicate the real mirror positions.}
\label{fig:simulation_setup}
\end{figure}

\subsubsection{Quantitative evaluations with synthesized data}
Figures \ref{fig:simulation_Nm_2} and \ref{fig:simulation_Nm_3} report the average accuracy of our labeling in cases of $N_{\pi} = 2$ and $N_{\pi} = 3$, respectively, under different conditions: (a) with Proposition 1 only, (b) with Proposition 2 only, and (c) with Propositions 1 and 2. In these figures, the magenta, red, green, and blue plots indicate the accuracy of labeling 0th, 1st, 2nd, and 3rd reflections, respectively. $\sigma_{\VEC{r}}$ denotes the standard deviation of zero-mean Gaussian pixel noise injected to the positions of the input points $\VEC{r} \in R$, and the average accuracy is computed from the results of 50 trials at each noise level.

\begin{figure*}[t]
\centering
\includegraphics[width=0.9\linewidth]{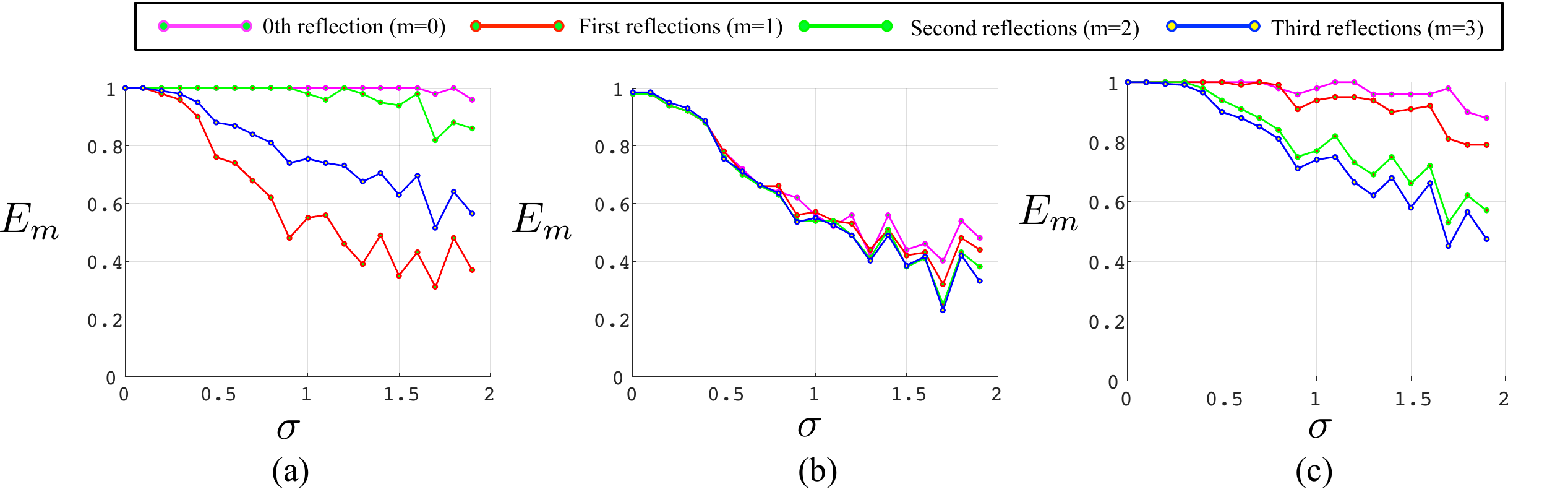}
\caption{The average accuracy of labeling in $N_{\pi}=2$ scenario. (a) with Proposition 1, (b) with Proposition 2, and (c) with Propositions 1 and 2}
\label{fig:simulation_Nm_2}
\end{figure*}

\begin{figure*}[t]
\centering
\includegraphics[width=0.9\linewidth]{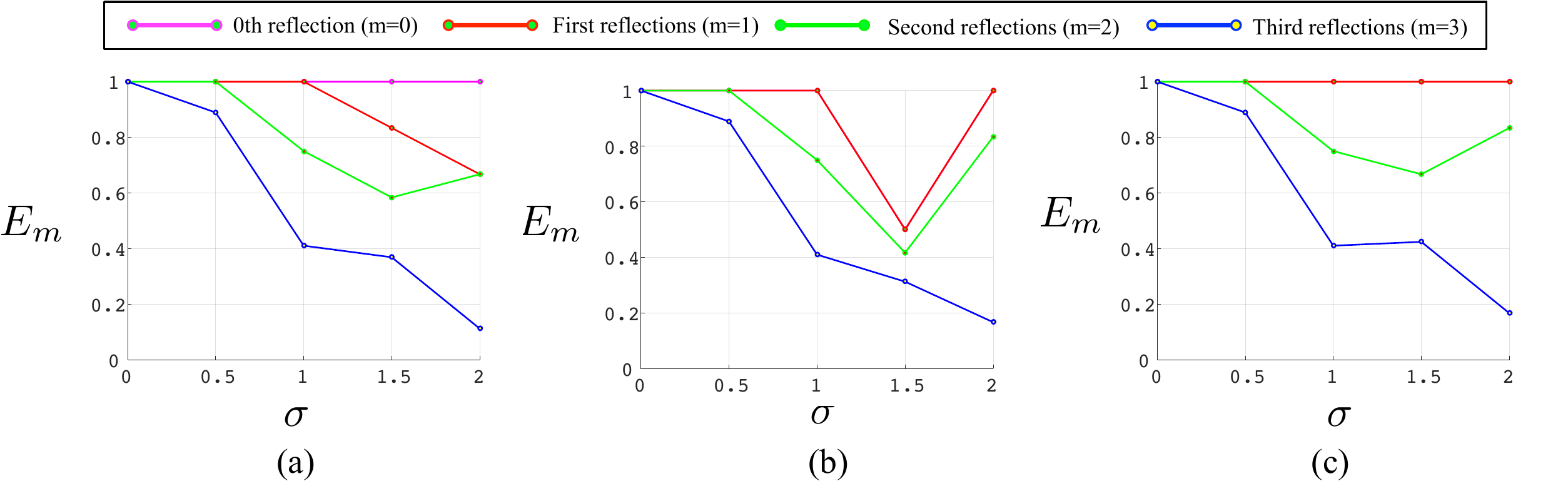}
\caption{The average accuracy of labeling in $N_{\pi}=3$ scenario. (a) with Proposition 1, (b) with Proposition 2, and (c) with Propositions 1 and 2}
\label{fig:simulation_Nm_3}
\end{figure*}

Figure \ref{fig:error_example_Nm_2} shows failure cases in Figures \ref{fig:simulation_Nm_2}(a) and \ref{fig:simulation_Nm_2}(b). In Figure \ref{fig:error_example_Nm_2}(a), the mirror $\pi_1$ is reconstructed between $\VEC{r}_2$ and $\VEC{r}_4$, and $\pi_2$ is reconstructed between $\VEC{r}_0$ and $\VEC{r}_2$. These mirrors correspond to $\pi_{21}$ and $\pi_{2}$ in the original configuration (Figure \ref{fig:simulation_setup}(a)), and such chamber assignment can result in a good reprojection error and a good recall ratio (Eq. \eqref{eq:recall}) but violating Proposition 2. In case of Figure \ref{fig:error_example_Nm_2}(b), the labeling is valid in terms of the reprojection error and the mirror angle but not in terms of Proposition 1.

\begin{figure}[t]
\centering
\includegraphics[width=0.9\linewidth]{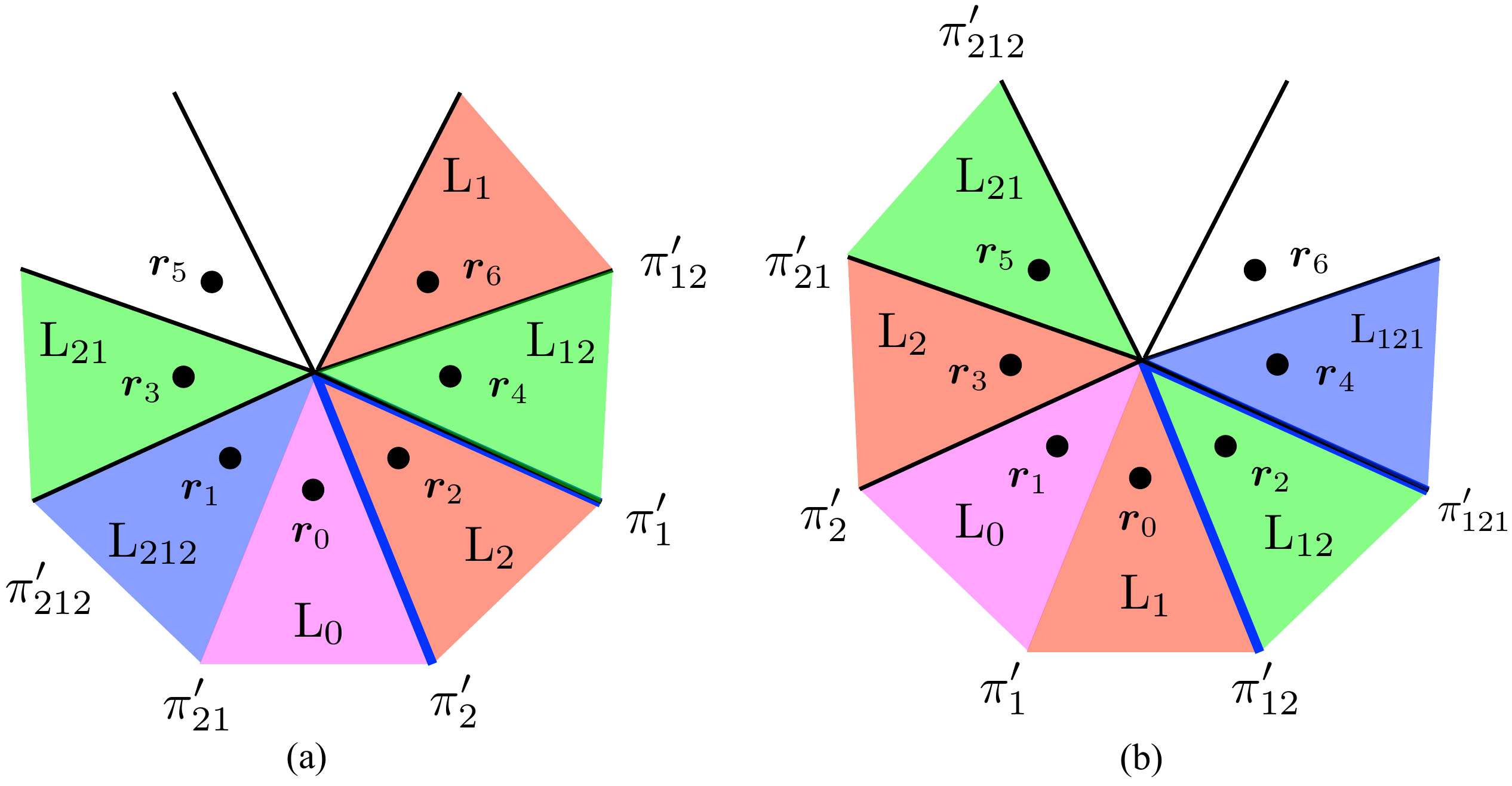}
\caption{Failure cases. (a): The mirror $\pi_1'$ is reconstructed between $\VEC{r}_2$ and $\VEC{r}_4$, and $\pi_2'$ is reconstructed between $\VEC{r}_0$ and $\VEC{r}_2$. (b): The labeling is not valid in terms of the base chamber selection.}
\label{fig:error_example_Nm_2}
\end{figure}

\begin{figure}[htbp]
\centering
\includegraphics[width=0.9\linewidth]{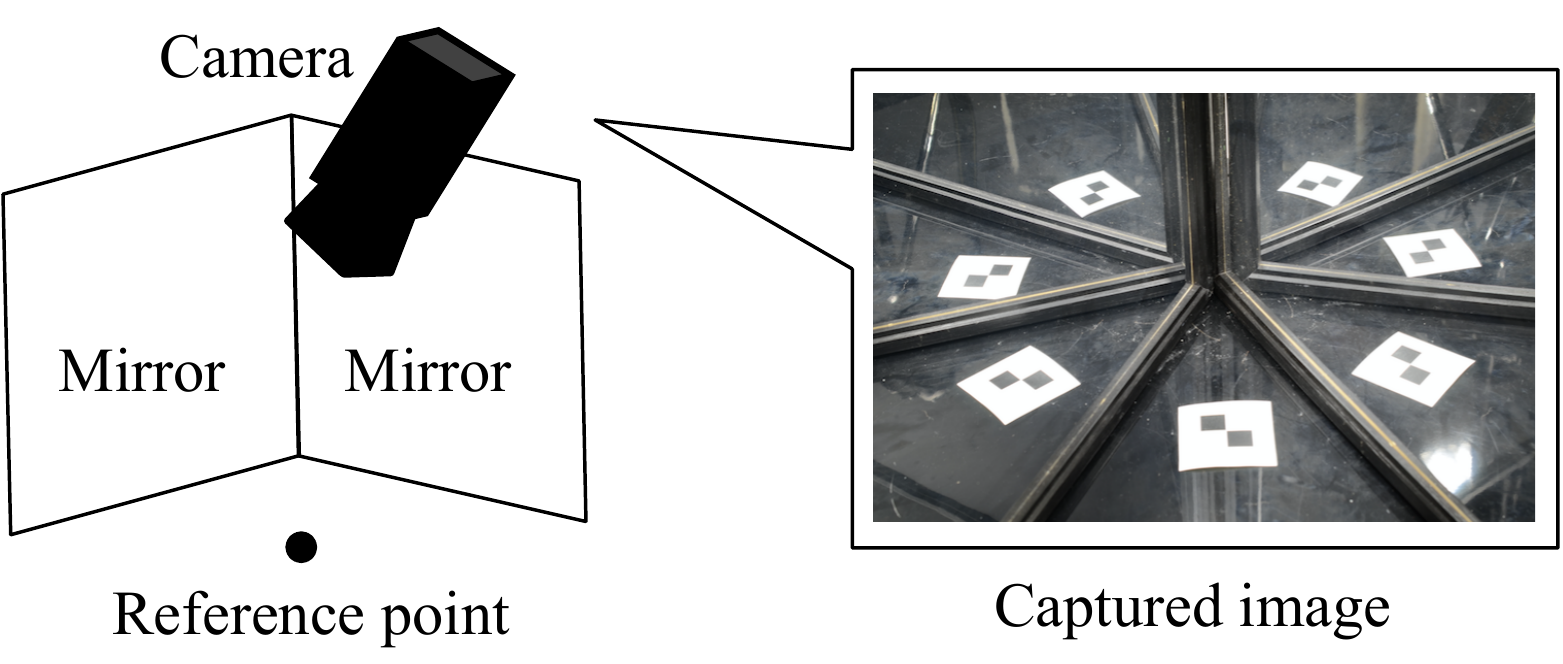}
\caption{A illustration of a capture system (left) and a captured image (right). It consists of planar first surface mirrors and a camera.}
\label{fig:real_data_setup}
\end{figure}

These results show that (1) our method can estimate the correct labeling in the ideal case without noise, and (2) the two propositions can improve the accuracy. 
In addition, these results prove experimentally that our method can work with non-orthogonal mirrors whereas the state-of-the-art algorithm\cite{reshetouski13discovering} assumes mirrors to be orthogonal to a common ground plane.

\begin{figure*}[t]
\centering
\includegraphics[width=0.9\linewidth]{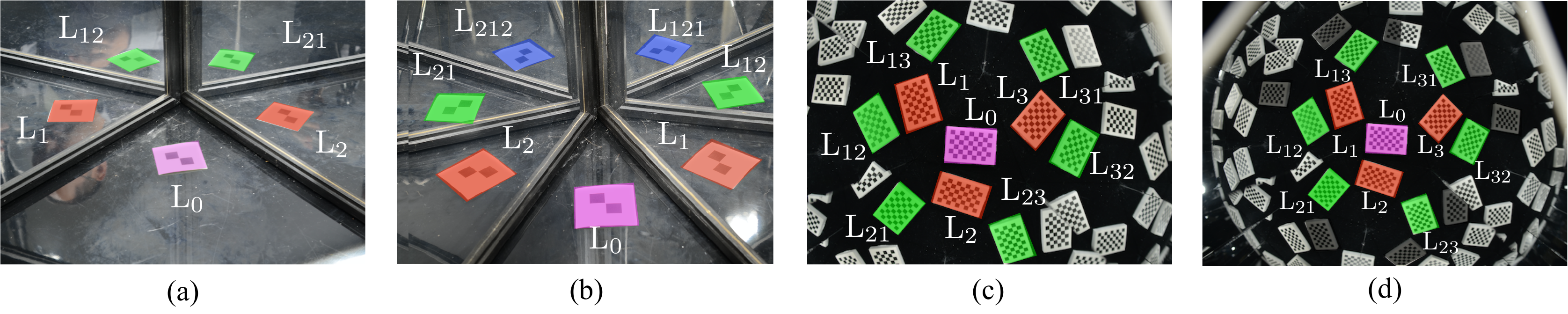}
\caption{Chamber assignment results. The labels denote the assigned chambers, and the magenta, red, green, and blue are superimposed on the direct, the first, the second, and the third reflections.}
\label{fig:real_data_results}
\end{figure*}

One of the main reasons for the degraded accuracy in the noisy condition is the performance of the mirror parameter estimation defined in Section \ref{sec:base_structure}. Since it minimizes the number of input points, the accuracy of the estimated mirror parameters can be sensitive to noise, and hence the projections of the reflections computed using such mirror parameters can fall far from the expected candidate points.

\begin{figure}[t]
\centering
\includegraphics[width=0.6\linewidth]{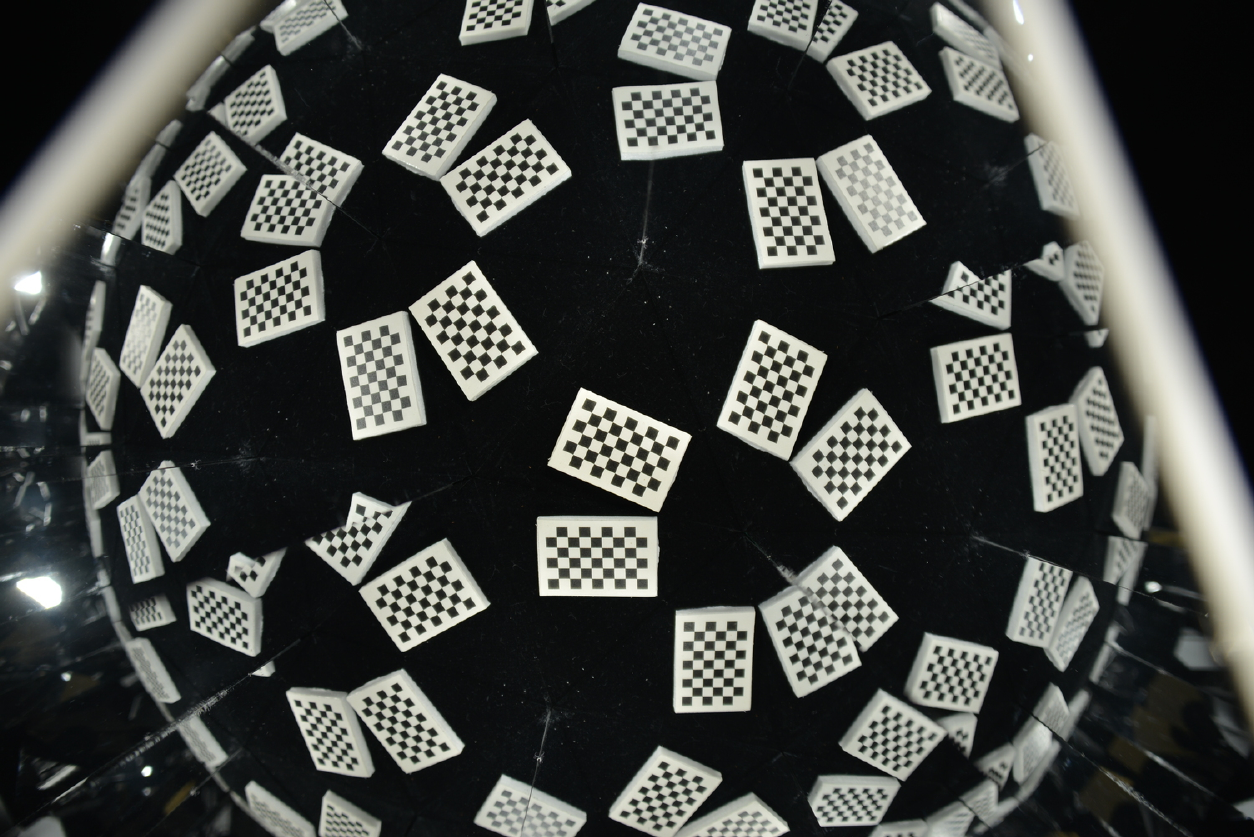}
\caption{A capture of a chessboard used as the reference object for conventional methods.}
\label{fig:chessboard}
\end{figure}

\subsubsection{Qualitative evaluations with real data}
Figure \ref{fig:real_data_setup} shows our mirror-based imaging system. We used planar first surface mirrors and captured images with Nikon D600 (resolution $6016{\times}4016$). The intrinsic parameter $A$ of the camera is calibrated by Zhang's method \cite{zhang2000flexible} in advance. Note that we used a chessboard in order to obtain the precise corner points and gave them to the proposed method as input after removing the label information.

Figure \ref{fig:real_data_results} shows the chamber assignment results by our method for different mirror numbers and different numbers of reflections. We used projections of a single corner point in the images. The labels $L_0, L_1, \dots$ indicate the assigned chambers, and the target objects are superimposed by colors in accordance with the number of reflections. In the case of $N_{\pi} = 3$, each of the mirrors is tilted at about 5 degrees. 

In the two mirrors cases ((a) and (b)), we can see that the proposed method estimates up to the third reflections correctly. While it fails to estimate the third reflections due to the observation noise in the three mirror cases ((c) and (d)), it estimates up to the second reflections correctly, which is needed for the calibration step.

From these results, we can observe that our method can successfully estimate the mirror system structure automatically in practice.

\subsection{Mirror parameter estimation}
This section provides evaluations of the performance of the proposed method in terms of mirror parameter estimation. In this evaluation, we used the same camera used in Section \ref{sec:eval_chamber_assignment}.

The proposed method was compared with the following conventional algorithms. The baseline and orthogonality constraint-based method utilize a reference object of known geometry as shown in Figure \ref{fig:chessboard}. Since the orthogonality constraint-based approach \etal\cite{takahashi12new} require more than two mirrors, we evaluate in $N_{\pi}=3$ configuration.

\begin{figure}[t]
\centering
\includegraphics[width=0.5\linewidth]{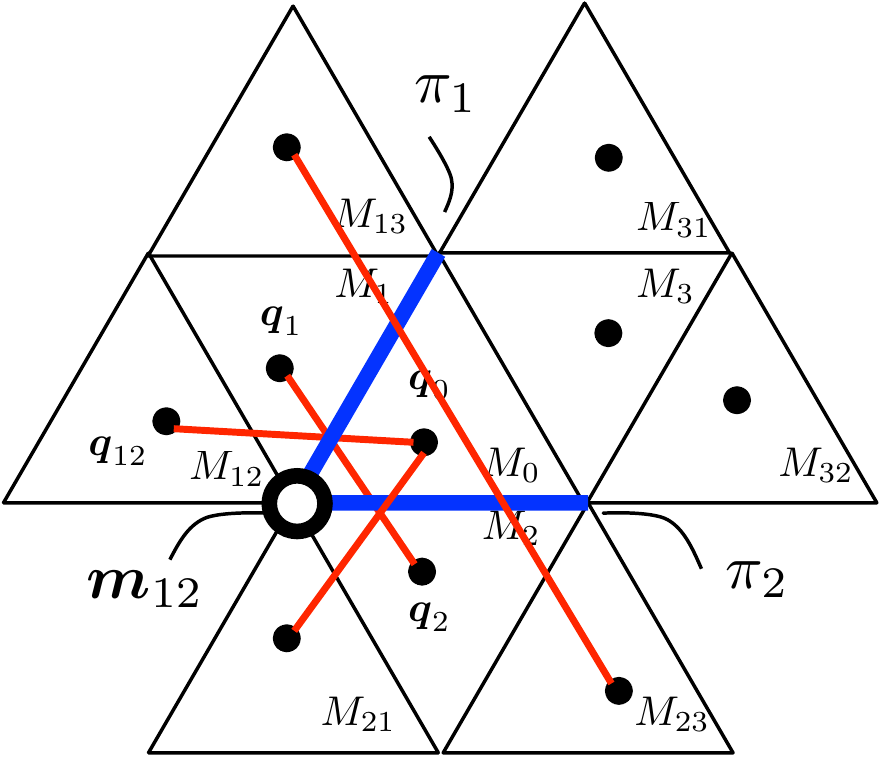}
\caption{Corresponding points for the orthogonality constraint\cite{takahashi12new}. Four pairs 
    $\langle\VEC{p}_{1}, \VEC{p}_{2}\rangle$, 
    $\langle\VEC{p}_{0}, \VEC{p}_{21}\rangle$,
    $\langle\VEC{p}_{12}, \VEC{p}_{0}\rangle$, and 
    $\langle\VEC{p}_{13}, \VEC{p}_{23}\rangle$ are available for the intersection $\VEC{m}_{12} = \VEC{n}_1 \times \VEC{n}_2$.}
\label{fig:corr_takahashi}
\end{figure}

{\bf[Baseline]} This baseline method is based on the simple bundle adjustment approach with the checkerboard as described in \cite{reshetouski11three}. Since the 3D geometry of the reference object is known, the 3D positions of the real image $\VEC{p}_0^{(l)}$ and their reflections $\VEC{p}_i^{(l)}$ and $\VEC{p}_{i,j}^{(l)}$ can be estimated by solving perspective-n-point (PnP)\cite{lepetit08epnp}. Here the superscript ${}^{(l)}$ indicates the $l$th landmark in the reference object. Once $N_l$ such landmark 3D positions are given, then the mirror normals can be computed simply by
\begin{equation}
  \begin{split}
    \VEC{n}_1 = \sum_l^{N_l} \VEC{l}_{1,2,3}^{(l)} / \left\|\sum_l^{N_l} \VEC{l}_{1,2,3}^{(l)}\right\|, \\
    \VEC{n}_2 = \sum_l^{N_l} \VEC{l}_{2,3,1}^{(l)} / \left\|\sum_l^{N_l} \VEC{l}_{2,3,1}^{(l)}\right\|, \\
    \VEC{n}_3 = \sum_l^{N_l} \VEC{l}_{3,1,2}^{(l)} / \left\|\sum_l^{N_l} \VEC{l}_{3,1,2}^{(l)}\right\|, \\
  \end{split}
\end{equation}
where $\VEC{l}_{i,j,k}^{(l)} = \VEC{p}_{i}^{(l)} - \VEC{p}_{0}^{(l)} + \VEC{p}_{ij}^{(l)} - \VEC{p}_{j}^{(l)} + \VEC{p}_{ik}^{(l)} - \VEC{p}_{k}^{(l)}$, and then the mirror distances can be computed by
\begin{equation}
  \begin{split}
    d_1 = \frac{1}{6{N_l}}\VEC{n}_1^\top\sum_l^{N_l}\left( \sum_{i=0}^3 \left( \VEC{p}_{i}^{(l)} \right) + \VEC{p}_{12}^{(l)} + \VEC{p}_{13}^{(l)}\right),\\
    d_2 = \frac{1}{6{N_l}}\VEC{n}_2^\top\sum_l^{N_l}\left( \sum_{i=0}^3 \left( \VEC{p}_{i}^{(l)} \right) + \VEC{p}_{23}^{(l)} + \VEC{p}_{21}^{(l)}\right),\\
    d_3 = \frac{1}{6{N_l}}\VEC{n}_3^\top\sum_l^{N_l}\left( \sum_{i=0}^3 \left( \VEC{p}_{i}^{(l)} \right) + \VEC{p}_{31}^{(l)} + \VEC{p}_{32}^{(l)}\right).\\
  \end{split}
\end{equation}
Note that the above PnP procedure requires a non-linear reprojection error minimization process in practice.

\begin{figure*}[t]
    \centering
    \includegraphics[width=1\linewidth]{./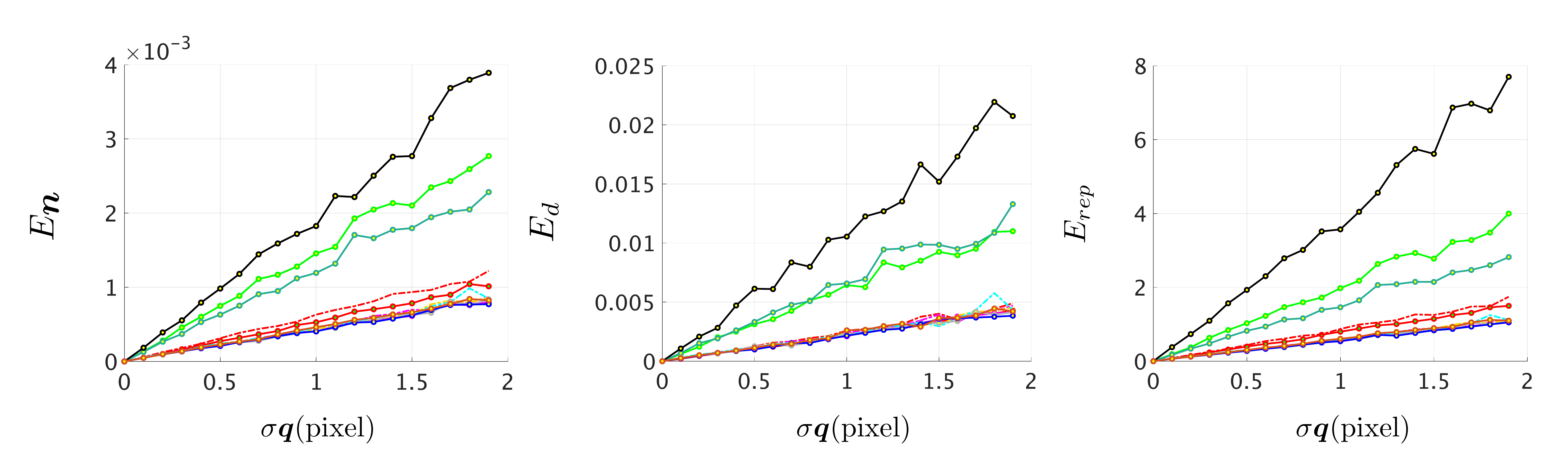}
    \caption{Estimation errors at different noise levels $\sigma_{\VEC{q}}$. Since the reprojection errors $E_{rep}$ after bundle adjustment are close to each other, $E_{rep}$ at $\sigma_{\VEC{q}}=1.0$ are also reported in Fig.~\ref{fig:label} with legends.}
    \label{fig:simulation_noise}
\end{figure*}

\begin{figure*}[t]
  \centering
  \includegraphics[width=1\linewidth]{./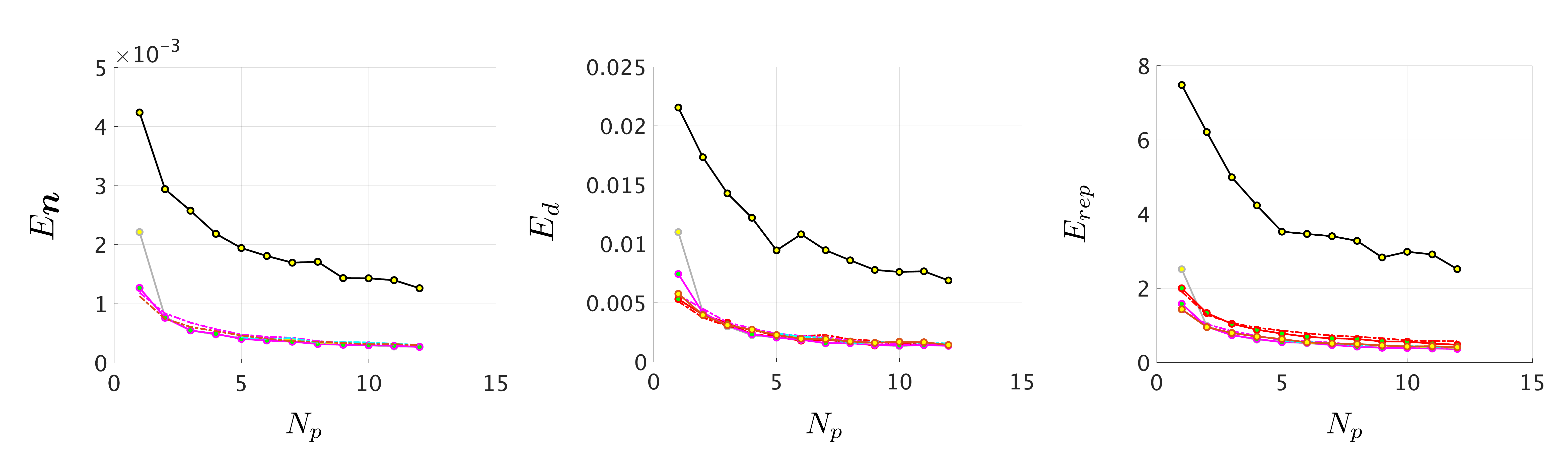}
  \caption{Estimation errors at different numbers of reference points $N_p$. Legends are provided in Figure \ref{fig:label}. The results after bundle adjustment are almost the same.}
  \label{fig:simulation_point}
\end{figure*}

{\bf[Absolute conic based approach]} Ying \etal\cite{ying10geometric,ying13self} provided some important insights on mirror reflections, such as the multi-reflections of a 3D point lie in a 3D circle $\Omega_{3D}$ that is imaged into a conic $\Omega$. The circular points can be determined by the intersection of this projected conic and the absolute conic computed from the intrinsic parameters. These circular points provide a rectification matrix that transform the projected conic to a 3D circle $\Omega'_{3D}$. By solving the conventional PnP problem between the 3D points on $\Omega'_{3D}$ and the 2D points on $\Omega$, the coordinates of the 3D points on $\Omega'_{3D}$ in camera coordinate system can be computed. On the basis of this procedure, for example, the mirror normal $\VEC{n}_1$ and $\VEC{n}_2$ can be estimated from mirror reflections of a single 3D point consisting of a 3D circle, $\VEC{p}_0, \VEC{p}_{1}, \VEC{p}_{2}, \VEC{p}_{12}, \VEC{p}_{21}$, as follows,
\begin{equation}
    \begin{split}   
        \VEC{n}_0 &= (\VEC{p}_1 - \VEC{p}_0) / ||\VEC{p}_1 - \VEC{p}_0||, \\
        \VEC{n}_1 &= (\VEC{p}_2 - \VEC{p}_0) / ||\VEC{p}_2 - \VEC{p}_0||.
    \end{split}
\end{equation}
In the case of $N_m = 3$, the two candidates of each mirror parameter can be obtained, for example $\VEC{n}_1$ from two point sets ($\VEC{p}_0, \VEC{p}_1, \VEC{p}_{2}, \VEC{p}_{12}, \VEC{p}_{21}$) and ($\VEC{p}_0, \VEC{p}_1, \VEC{p}_3, \VEC{p}_{13}, \VEC{p}_{31}$). In addition, the $N_p$ 3D points produce $N_p$ variations of the mirror parameters on the basis of this procedure. Here, we utilized the average of each parameter as the output of this approach in these evaluations.

{\bf[Orthogonality constraint based approach]}
As pointed out by Takahashi\etal\cite{takahashi12new}, two 3D points $\VEC{p}_i$ and $\VEC{p}_j$ defined as reflections of a 3D point by different mirrors of normal $\VEC{n}_i$ and $\VEC{n}_j$ respectively satisfy an orthogonality constraint:
\begin{equation}
  \left( \VEC{p}_i - \VEC{p}_j \right)^\top \left( \VEC{n}_i \times \VEC{n}_j \right) = \left( \VEC{p}_i - \VEC{p}_j \right)^\top \VEC{m}_{ij} = 0. \label{eq:orthogonality}
\end{equation}
As illustrated by Figure \ref{fig:corr_takahashi}, this constraint on $\VEC{m}_{12}$ holds for four pairs 
$\langle\VEC{p}_{1}, \VEC{p}_{2}\rangle$, 
$\langle\VEC{p}_{0}, \VEC{p}_{21}\rangle$,
$\langle\VEC{p}_{12}, \VEC{p}_{0}\rangle$, and 
$\langle\VEC{p}_{13}, \VEC{p}_{23}\rangle$ 
as the reflections of $\VEC{p}_0$, $\VEC{p}_1$, $\VEC{p}_2$, and $\VEC{p}_3$, respectively. Similarly, 
$\langle\VEC{p}_{2}, \VEC{p}_{3}\rangle$, 
$\langle\VEC{p}_{21}, \VEC{p}_{31}\rangle$, 
$\langle\VEC{p}_{0}, \VEC{p}_{32}\rangle$, and
$\langle\VEC{p}_{23}, \VEC{p}_{0}\rangle$
can be used for computing $\VEC{m}_{23} = \VEC{n}_2 \times \VEC{n}_3$, and
$\langle\VEC{p}_{3}, \VEC{p}_{1}\rangle$,
$\langle\VEC{p}_{31}, \VEC{p}_{0}\rangle$,
$\langle\VEC{p}_{32}, \VEC{p}_{12}\rangle$, and
$\langle\VEC{p}_{0}, \VEC{p}_{13}\rangle$
can be used for $\VEC{m}_{31} = \VEC{n}_3 \times \VEC{n}_1$. Once the intersection vectors $\VEC{m}_{12}$, $\VEC{m}_{23}$, and $\VEC{m}_{31}$ are obtained, the mirror normals and the distances can be estimated linearly as described in Takahashi\etal \cite{takahashi12new}.

In addition to the above methods, we also compared the proposed method with its earlier version\cite{takahashi17linear} in which the 3D points were restricted to be on the line through its 2D observations in the base chamber, while the proposed method does not have this limitation as described in Section \ref{sec:ba}.

\begin{figure}[t]
  \centering
  \includegraphics[width=1\linewidth]{./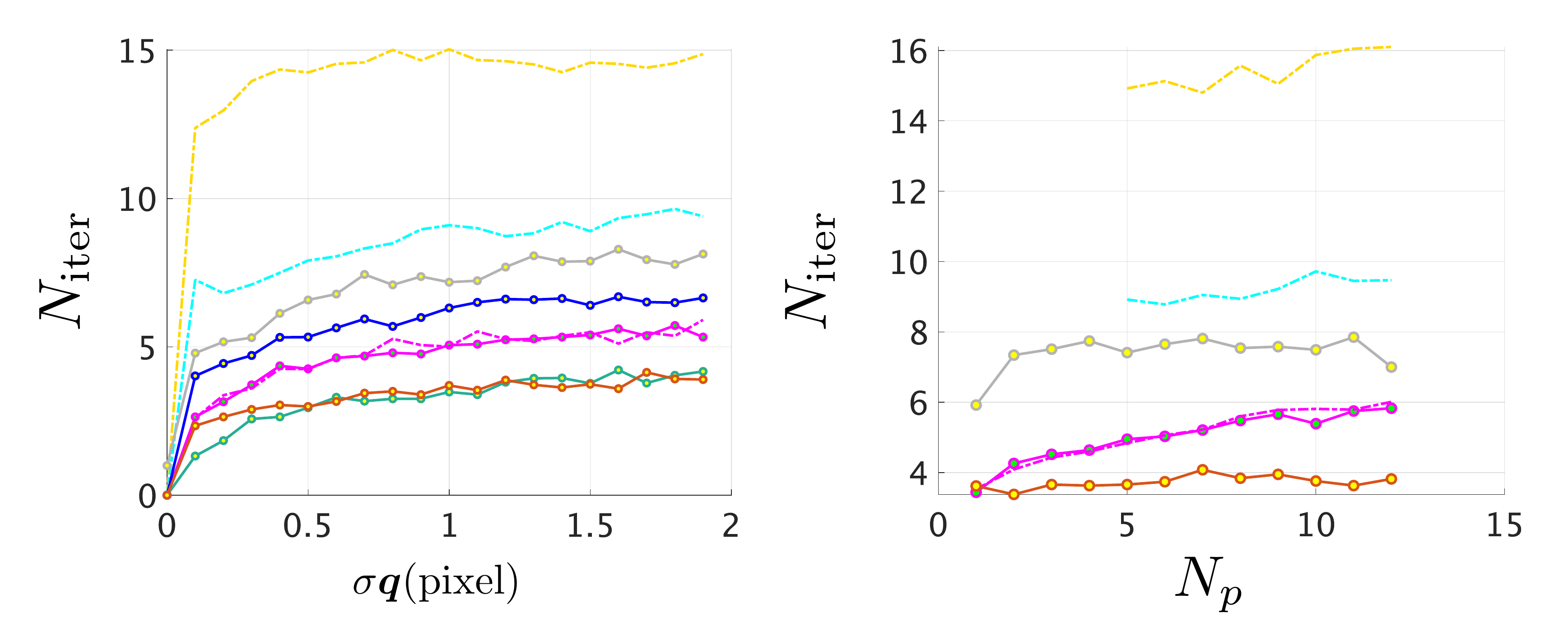}
  \caption{Number of iterations until convergence at different $\sigma_{\VEC{q}}$ with $N_p=5$ (left) and at different $N_p$ with $\sigma_{\VEC{q}}=1$ (right). Legends are provided in Figure \ref{fig:label}.}
  \label{fig:simulation_iteration}
\end{figure}

\begin{figure}[t]
  \centering    
  \centering
  \includegraphics[width=\linewidth]{./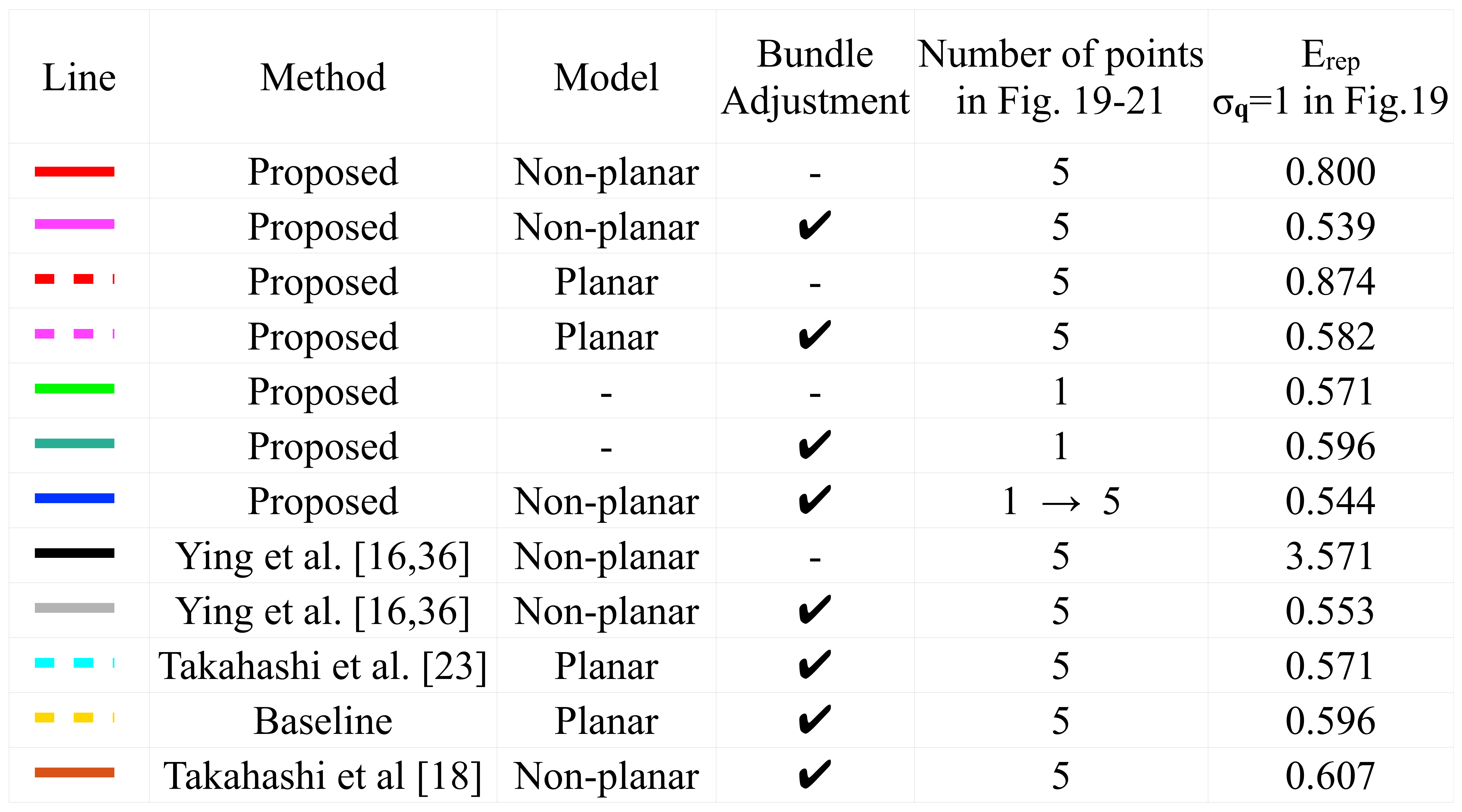}
  \caption{Legends and $E_{rep}$ at $\sigma_{\VEC{q}}=1$ in Figure \ref{fig:simulation_noise} for each configurations.}
  \label{fig:label}
\end{figure}

The following three error metrics are used in this section to evaluate the performance of the proposed method in comparison with the above-mentioned conventional approaches quantitatively. The average estimation error of normal $E_{\VEC{n}}$ measures the average angular difference from the ground truth by
\begin{equation}
    E_{\VEC{n}} = \frac{1}{3} \sum_{i=1}^3 \left|\cos^{-1}(\VEC{n}_i^\top \check{\VEC{n}}_{i}) \right|,
\end{equation}
where $\check{\VEC{n}}_i \: (i=1,2,3)$ denotes the ground truth of the normal $\VEC{n}_{i}$.  The average estimation error of distance $E_{d}$ is defined as the average $L_1$-norm to the ground truth:
\begin{equation}
    E_{d} = \frac{1}{3} \sum_{i=1}^{3} |d_i - \check{d}_{i}|,
\end{equation}
where $\check{d}_{i} \: (i=1,2,3)$ denotes the ground truth of the distance $d_{i}$.  Also, the average reprojection error $E_\mathrm{rep}$ is defined as:
\begin{equation}
    E_\mathrm{rep} = \frac{1}{10{N_l}} \sum_{l=1}^{{N_l}} \left| \VEC{E}^{(l)}(\VEC{p}_0, \VEC{n}_1, \VEC{n}_2, \VEC{n}_3, d_1, d_2, d_3) \right|,
\end{equation}
where $\VEC{E}^{(l)}(\cdot)$ denotes the reprojection error $\VEC{E}(\cdot)$ defined by Eq. \eqref{eq:reproj_error_vec} at $l$th point.

\subsubsection{Quantitative evaluations with synthesized data}
This section provides a quantitative performance evaluation using a synthesized dataset. A virtual camera and three mirrors are arranged in accordance with the real setup (Figure \ref{fig:setup}). By virtually capturing 3D points simulating a reference object, the corresponding 2D kaleidoscopic projections used as the ground truth are generated first, and then random pixel noise is injected to them at each trial of calibration.

Figures \ref{fig:simulation_noise} and \ref{fig:simulation_point} report average estimation errors $E_{\VEC{n}}$, $E_d$, $E_\mathrm{rep}$ over 100 trials at different noise levels and different numbers of reference points. In these figures $\sigma_{\VEC{q}}$ denotes the standard deviation of zero-mean Gaussian pixel noise and $N_p$ denotes the number of 3D points used in the calibration. In Figure \ref{fig:simulation_iteration}, $N_\mathrm{iter}$ denotes the number of iterations required by the kaleidoscopic bundle adjustment in each evaluation. Figure \ref{fig:label} shows the legends of lines in Figures \ref{fig:simulation_noise} - \ref{fig:simulation_iteration} and reprojection errors $E_{rep}$ of each method at $\sigma_{\VEC{q}} = 1$.

\begin{figure}[t]
  \centering
    \includegraphics[width=\linewidth]{./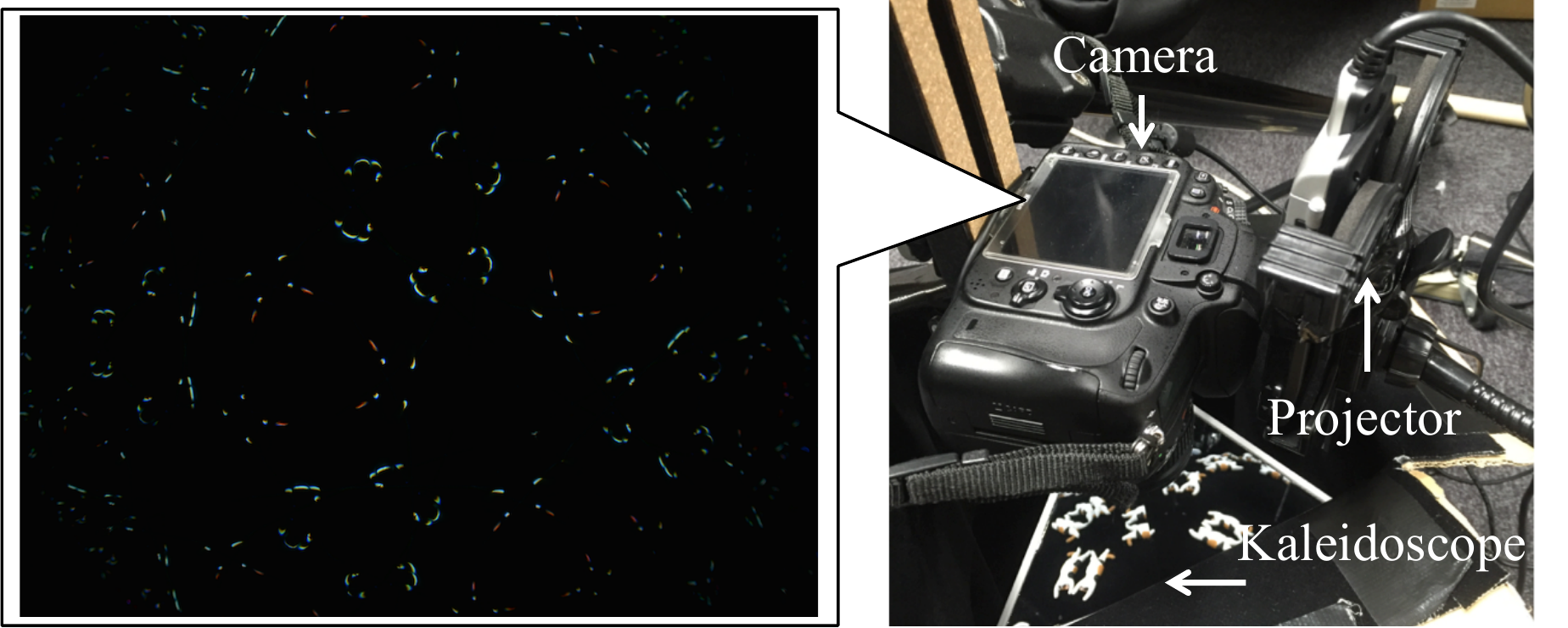}
    \caption{A kaleidoscopic capture setup for 3D reconstruction. It consists of three first surface mirrors, a camera, and a laser projector.}
    \label{fig:setup}
\end{figure}

\begin{figure}[t]
  \centering
    \includegraphics[width=0.6\linewidth]{./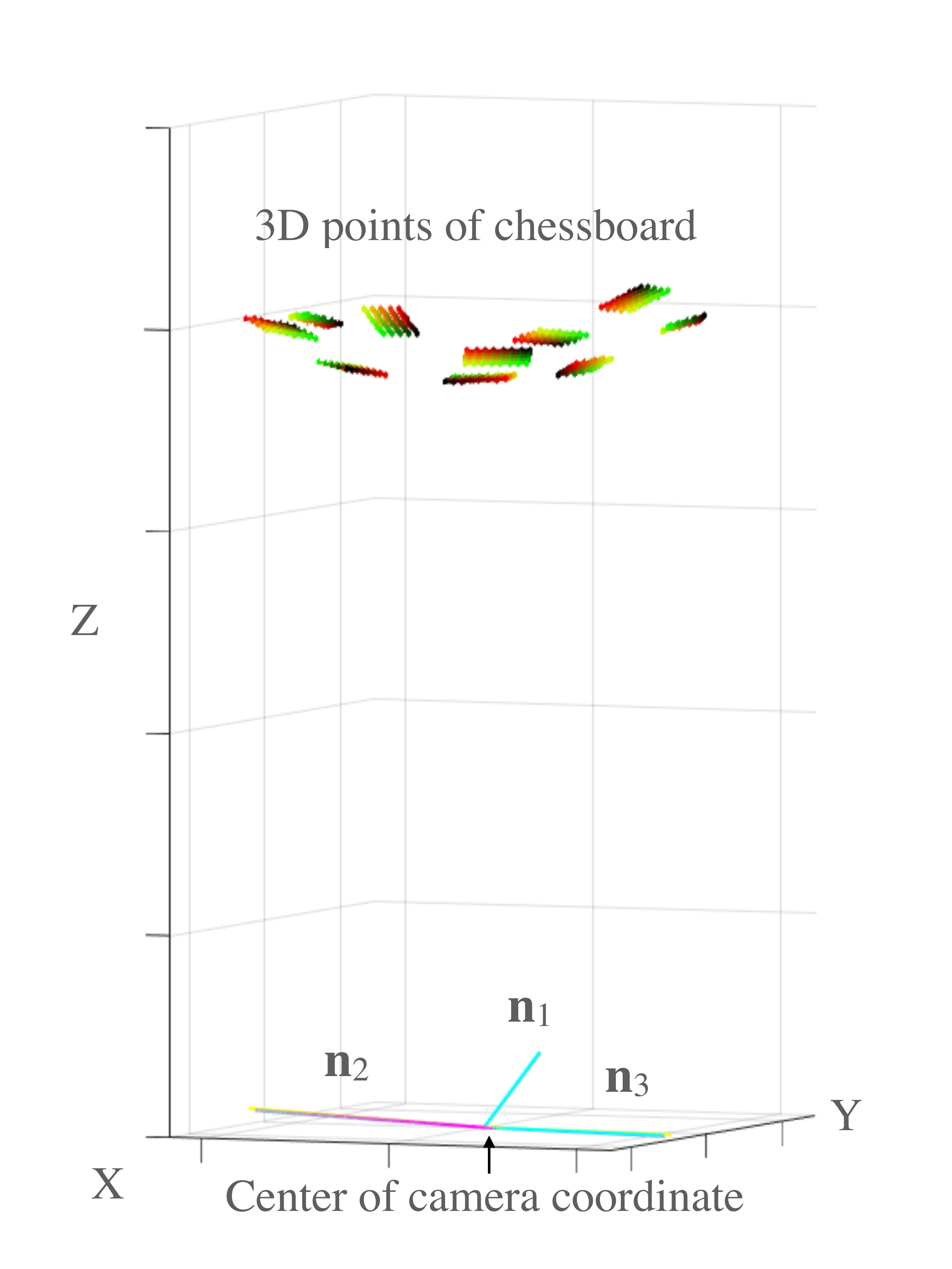}
    \caption{Calibration results. The colored lines in the bottom illustrate $d \VEC{n}$ (\ie, the foot of perpendicular from the camera center) of each mirror. Here magenta, gray, cyan, yellow lines represents the results by proposed method, ying \etal\cite{ying10geometric,ying13self}, takahashi\etal\cite{takahashi12new}, and baseline respectively. Their results were almost the same. The 10 patterns in the top illustrate the 3D points estimated by PnP.}
    \label{fig:real_chess}
\end{figure}

\begin{figure}[t]
  \centering
    \includegraphics[width=0.7\linewidth]{./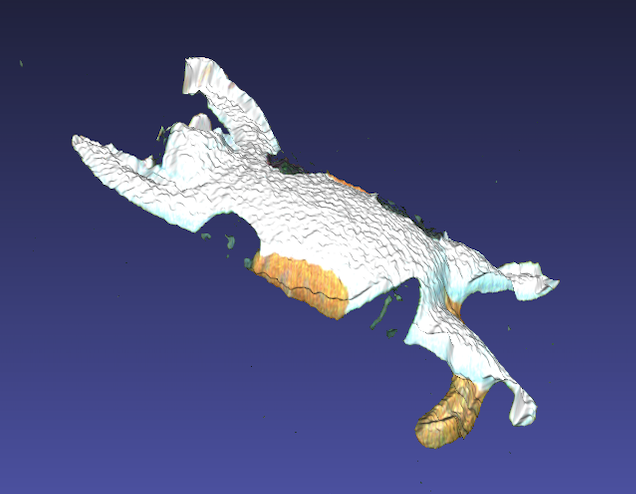}
    \caption{Reconstructed 3D shape of {\it cat} object.}
    \label{fig:ststereo}
\end{figure}

As shown in Figure \ref{fig:label}, the magenta and red lines denote the results by the proposed method with and without the kaleidoscopic bundle adjustment (Section \ref{sec:ba}). The proposed method uses kaleidoscopic projections of five random non-planar 3D points, while the dashed red and magenta lines are the results with planar five points simulating the chessboard (Figure \ref{fig:chessboard}). The light and dark green lines are the results with a single 3D point generated randomly followed by the kaleidoscopic bundle adjustment or not. Besides, the blue line is the result of the kaleidoscopic bundle adjustment with five points to the linear solutions from a single point.

The black line is the result of the absolute conic based approach \cite{ying10geometric,ying13self} with five input points, and the gray line is the result by applying our kaleidoscopic bundle adjustment to it. The yellow and cyan dashed lines are the results by the orthogonality constraint-based approach \cite{takahashi12new} and the baseline with the same five points for the red and magenta dashed lines. The orange line shows the result of the earlier version of this work \cite{takahashi17linear} with five input points. Note that the baseline and the orthogonality constraint-based approach \cite{takahashi12new} without the final non-linear optimization could not achieve comparable results (typically $E_\mathrm{rep} \gg 10$ pixels). Also, these methods using 3D reference positions without applying non-linear refinement after a linear PnP\cite{lepetit08epnp} could not estimate valid initial parameters for the final non-linear optimization. Therefore, they are omitted in these figures.

From both Figs. \ref{fig:simulation_noise} and \ref{fig:simulation_point}, we can observe the linear solutions by the proposed method outperform those by the absolute conic based approach\cite{ying10geometric,ying13self}. This is because each mirror parameter is implicitly constrained by the other mirrors in the proposed method, such as in Eqs. \eqref{eq:n1} and \eqref{eq:n2}, whereas they are estimated on an apparently per-mirror basis in the absolute conic based approach. This improvement of linear solutions can contribute to reduce the computational cost as shown in Figure \ref{fig:simulation_iteration}. Additionally, increasing the number of reference points improves the estimation accuracy of all methods as shown in Figure \ref{fig:simulation_point}.

 These figures also show that the optimized results are close to each other, but as reported in Figure \ref{fig:label}, the proposed method with non-planar points and kaleidoscopic bundle adjustment approach outperforms the other methods in terms of reprojection error. 


Besides, the non-planar based methods are slightly better than the planar based method in Figure \ref{fig:simulation_point}. This is because the randomly scattered non-planar points in a 3D space give stronger constraints for 3D analysis than the points on a single plane. 

Whereas the Baseline method and Orthogonality constraint-based method require a reference object of known geometry, such as a chessboard, the proposed method does not. Using this feature, the proposed method can simultaneously estimate the camera parameters and 3D structure of a 3D object.

From these results, we can conclude that the proposed method is precise, effective, and practical.

\subsection{Qualitative evaluations with real data}
Figure \ref{fig:setup} shows our kaleidoscopic capture setup. The intrinsic parameter $A$ of the camera (Nikon D600, $6016{\times}4016$ resolution) is calibrated beforehand\cite{zhang2000flexible}, and the camera observes the target object \textit{cat} (about $4\times5\times1$ cm) with three planar first surface mirrors. The projector (MicroVision SHOWWX+ Laser Pico Projector, $848{\times}480$ resolution) is used to cast line patterns to the object for simplifying the correspondence search problem in a light-sectioning fashion (Figure \ref{fig:setup} left), and the projector itself is not involved in the calibration \wrt the camera and the mirrors.


\begin{table}[tb]
  \centering
  \caption{The reprojection errors of linear solutions and optimized solutions by each method. The $E_{rep}$ (all) column reports the average reprojection error of all the points. The 0th, 1st, and 2nd ref columns report the average error of the direct observations, that of the first and second reflections respectively.}\label{tb:real_data_reps}
  \begin{tabular}{ccccc}
     Method & $E_{rep}$ (all) & 0th ref & 1st ref & 2nd ref\\ \hline\hline    
     Proposed (linear) & 5.49 & 2.84 & 3.42 & 6.97\\ 
     Proposed (BA) & 3.85 & 4.24 & 3.55 & 3.94\\ 
     Ying \etal \cite{ying10geometric,ying13self} (linear) & 317.91 & 273.31 & 302.38 & 333.11 \\ 
     Ying \etal \cite{ying10geometric,ying13self} (BA) & 3.88 & 4.18 & 3.61 & 3.96\\ 
     Takahashi \etal \cite{takahashi12new} (linear) & 433.43 & 330.52 & 420.98 & 456.80\\
     Takahashi \etal \cite{takahashi12new} (BA) & 3.88 & 3.83 & 3.67 & 4.00\\ 
     Baseline (linear) &  618.21 & 351.60& 495.74 & 723.88\\ 
     Baseline (BA) & 3.84 & 3.88 & 3.98 & 3.76\\ 
  \end{tabular}
\end{table}

Figure \ref{fig:chessboard} shows a captured image of a chessboard, and Figure \ref{fig:real_chess} shows the mirror normals and distances calibrated by the proposed and conventional methods. In addition, Table \ref{tb:real_data_reps} reports the reprojection errors of linear solutions and optimized solutions by each method. The results show that the performance of each method after applying the kaleidoscopic bundle adjustment has the same tendency reported in the simulation results. The optimized reprojection errors are higher than those in the simulation results. This is because of the localization accuracy of corresponding points and nonplanarity of mirrors. 

Figure \ref{fig:ststereo} shows a 3D rendering of the estimated 3D shape using the mirror parameters calibrated by the proposed method, while the residual reprojection error indicates the parameters can be further improved for example through the 3D shape reconstruction process itself\cite{furukawa2009accurate}.

From these results, we can conclude that the proposed method performs reasonably and provides sufficiently accurate calibration for 3D shape reconstruction.

\section{Discussion}\label{sec:section_8}
\subsection{Ambiguity of chamber assignment}
In the case of the 8-point algorithm for the regular two-view extrinsic calibration\cite{hartley00multiple}, the linear algorithm returns four possible combinations of the rotation and the translation, and we can choose the right combination by examining if triangulated 3D points appear in front of the cameras. The mirror normal estimation in Section \ref{sec:base_structure} is a special case of the 8-point algorithm, and this has such sign ambiguity on the mirror normal as described in Section \ref{sec:base_structure}. This ambiguity is also solved by considering the result of triangulation. In other words, estimating the essential matrix is identical to estimating the mirror normal.

In addition to the sign ambiguity, the normal estimation for the kaleidoscopic system has another family of ambiguity due to multiple reflections. As introduced in Section \ref{sec:base_structure}, particular combinations of kaleidoscopic projections can return physically infeasible solutions, and they can be rejected by additional geometric constraints as done for the 8-point algorithm. However, there exists another class of solutions due to a \textit{sparse sampling} of the observations.

Consider a base structure by the pairs $\PAIR{\VEC{q}_0}{\VEC{q}_{12}}$ and $\PAIR{\VEC{q}_2}{\VEC{q}_{121}}$ in Figure \ref{fig:kaleidoscopic_image}. This configuration can estimate the mirror parameters successfully, one between $\VEC{p}_0$ and $\VEC{p}_2$, and the other between $\VEC{p}_0$ and $\VEC{p}_{12}$. Although the latter is a virtual mirror, this interpretation satisfies all the constraints in Section \ref{sec:base_structure}. In other words, we can assemble a mirror system of this configuration in practice.

To solve this problem, Section \ref{sec:assignment} utilizes the recall ratio (Eq. \eqref{eq:recall}) so that our algorithm returns the solution that reproduces as many as possible candidates points $\VEC{r}$ observed in the image.

\subsection{Approximation in label propagation}\label{sec:discussion_chamber_assignment} 
As described in Section \ref{sec:section_5}, we approximated the original chamber assignment algorithm by the nearest neighbor search. The evaluations experimentally prove that this approximation can substitute the original algorithm since the proposed algorithm worked correctly in most of the cases, particularly for the direct and the first reflections.

For multiple reflections with noisy inputs, the assignment accuracy is degraded. This can be attributed to the accuracy of the mirror normal estimation itself being degraded and the projections of multiple reflections being able to be synthesized with large reprojection errors.

\subsection{RANSAC or PROSAC approach for chamber assignment}\label{sec:discussion_ransac}
Although the proposed algorithm examines all possible base structures as introduced in Algorithm \ref{alg:overview} to evaluate the performance thoroughly, we can also consider a RANSAC (random sample consensus) or PROSAC (progressive sample consensus) approach\cite{chum2005matching}. For example, we can first hypothesize the base chamber from $N_r$ candidates and then can consider only $N_{\pi} + 1$ nearest points around it for estimating the mirror parameters. The other idea is to hypothesize the base chamber and to add some geometric procedures such as flipping the point pairs for finding the correct labeling efficiently. Designing and evaluating such an approach is one of our future works.

\subsection{Pruning of base structure candidates}\label{sec:discussion_pruning}
As described in Section \ref{sec:section_5}, our chamber assignment algorithm firstly lists all candidates of labeling of the base structure and prunes inappropriate candidates by using three geometric constraints, \ie, Eq. \eqref{eq:pruning}, Proposition 1, and Proposition 2. This section demonstrates how many candidates are removed by these constraints experimentally.

\begin{table}[tb]
  \centering
  \caption{The number and percentage of candidates that passed each pruning constraint in case of $\sigma_{\VEC{q}}=0$ and $\sigma_{\VEC{q}}=2$. Note that the total number of candidates is $151,200$.}\label{tb:pruning_noise_0}
  \begin{tabular}{ccc}
     Constraints & $\sigma_{\VEC{q}} = 0$ & $\sigma_{\VEC{q}} = 2$\\ \hline\hline    
     Eq. \eqref{eq:pruning} & 3,552 (2.33 \%) & 4,608 (3.05 \%)\\ 
     Proposition 1 & 25,200 (16.67 \%) & 25,200 (16.67 \%)\\ 
     Proposition 2 & 3,386 (2.24 \%)& 2,996 (1.98 \%)\\
     Proposition 1 and 2 & 796 (0.53 \%)& 492 (0.33 \%)\\
     All & 36 (0.023 \%) & 54 (0.036 \%)\\
  \end{tabular}
\end{table}

Here, we assume a case of three mirrors and second reflection as in Figure \ref{fig:base_structure} (b). Since the base structure consists of three doublets that consist of six points, the number of candidates of labeling is ${}_{10}\mathrm{P}_6 = 152100$ for six appropriate labeling candidates. 

Table \ref{tb:pruning_noise_0} reports the results of pruning in case of noise $\sigma_{\VEC{q}} = 0$ and $\sigma_{\VEC{q}} = 2$. These results show that using each constraint contributes to remove many of the labeling candidates. Whereas the performance of using Eq. \eqref{eq:pruning} depends on an experimentally defined threshold that should be almost zero, Proposition 1 and 2 work with clearer conditions. We can see that using both these propositions (``Proposition 1 and 2'') and using these all geometric constraints (``All'') outperform the results of using each constraint. They show that all constraints are complementary. Note that we found that the appropriate labeling candidates were included in the filtered results in all cases. 

These results show that using the proposed pruning rules can reduce the computational cost significantly.

\subsection{Degenerate cases}
Both proposed algorithms of chamber assignment and mirror parameter estimation are based on the kaleidoscopic projection constraint (Eq. \eqref{eq:kaleidoscopic_projection_constraint}) satisfied by second or further reflections. Therefore, these algorithms do not work in two cases. (1) If the two mirror are parallel, the mirror normals are not computable by solving Eq. \eqref{eq:kpc_use}, Eq. \eqref{eq:n1}, and Eq. \eqref{eq:n2} because the constraints are linearly dependent. (2) If the second reflections are not observable due to the angle of view or discontinuities, the mirror normals are not computable. Especially, in case of using more than three mirrors, discontinuities are more likely to happen in general, and the second reflections themselves become difficult to find (Figure \ref{fig:various_configurations}).

\begin{figure}[t]
\centering
\includegraphics[width=1\linewidth]{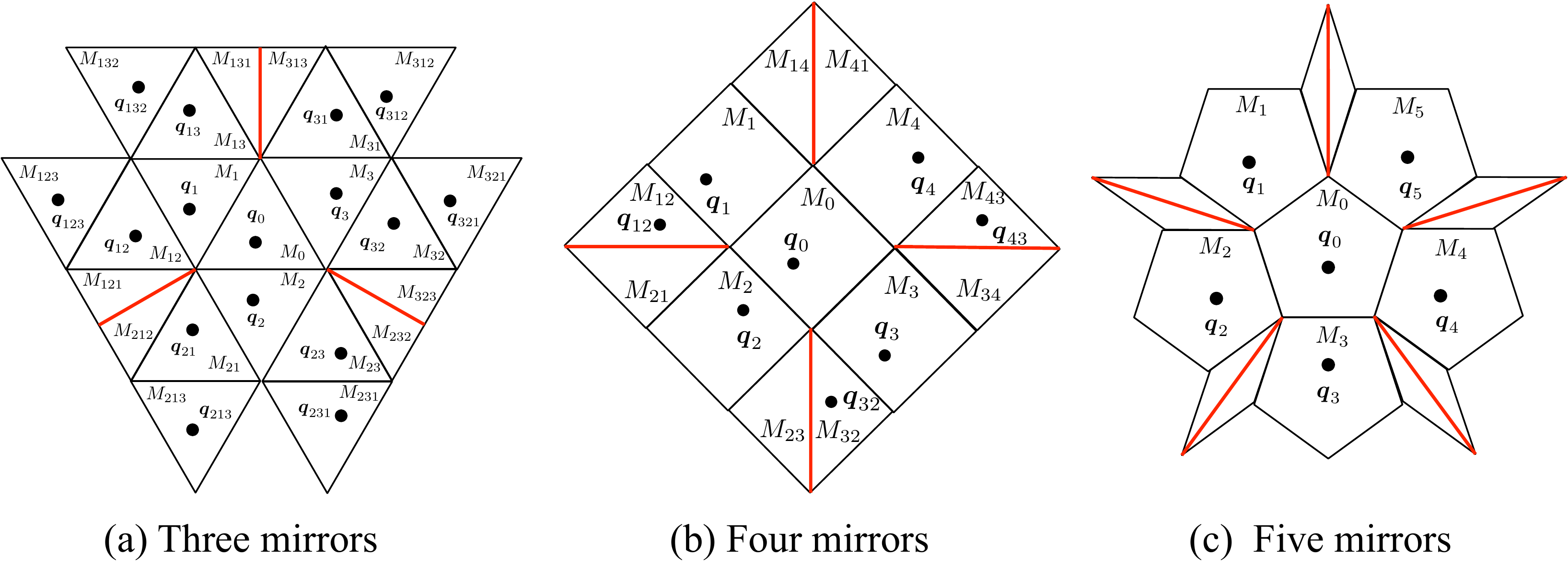}
\caption{Kaleidoscopic imaging system using (a) three, (b) four, and (c) five mirrors. Discontinuities (red lines) appear on the boundaries of overlapping chambers.}
\label{fig:various_configurations}
\end{figure}

\section{Conclusion}\label{sec:section_9}
This paper proposes a novel algorithm of discovering the structure of a kaleidoscopic imaging system that consists of multiple planar mirrors and a camera. The key idea of our approach is to introduce a kaleidoscopic projection constraint, that is an epipolar constraint on a mirror plane that is satisfied by projections of high-order reflections. This constraint enables the proposed method to conduct a 3D validation on candidates of chamber assignments and to derive mirror parameters linearly. Evaluations with synthesized data and real data prove that the proposed method discovers the structure of a kaleidoscopic imaging system in various configurations, \eg, the mirror is not orthogonal to a common ground.

As discussed in Section \ref{sec:section_8}, our future work includes developing an efficient algorithm on the basis of RANSAC / PROSAC as well as integrating the intrinsic parameter estimation towards fully-automatic self-calibration of kaleidoscopic imaging system.

\ifCLASSOPTIONcompsoc
  \section*{Acknowledgments}
  This research is partially supported by JSPS Kakenhi Grant Number 26240023 and 18K19815.
\else
  \section*{Acknowledgment}

\fi

\ifCLASSOPTIONcaptionsoff
  \newpage
\fi

\bibliographystyle{IEEEtran}
\bibliography{pami2017}




\begin{IEEEbiography}[{\includegraphics[width=1in,height=1.25in,clip,keepaspectratio]{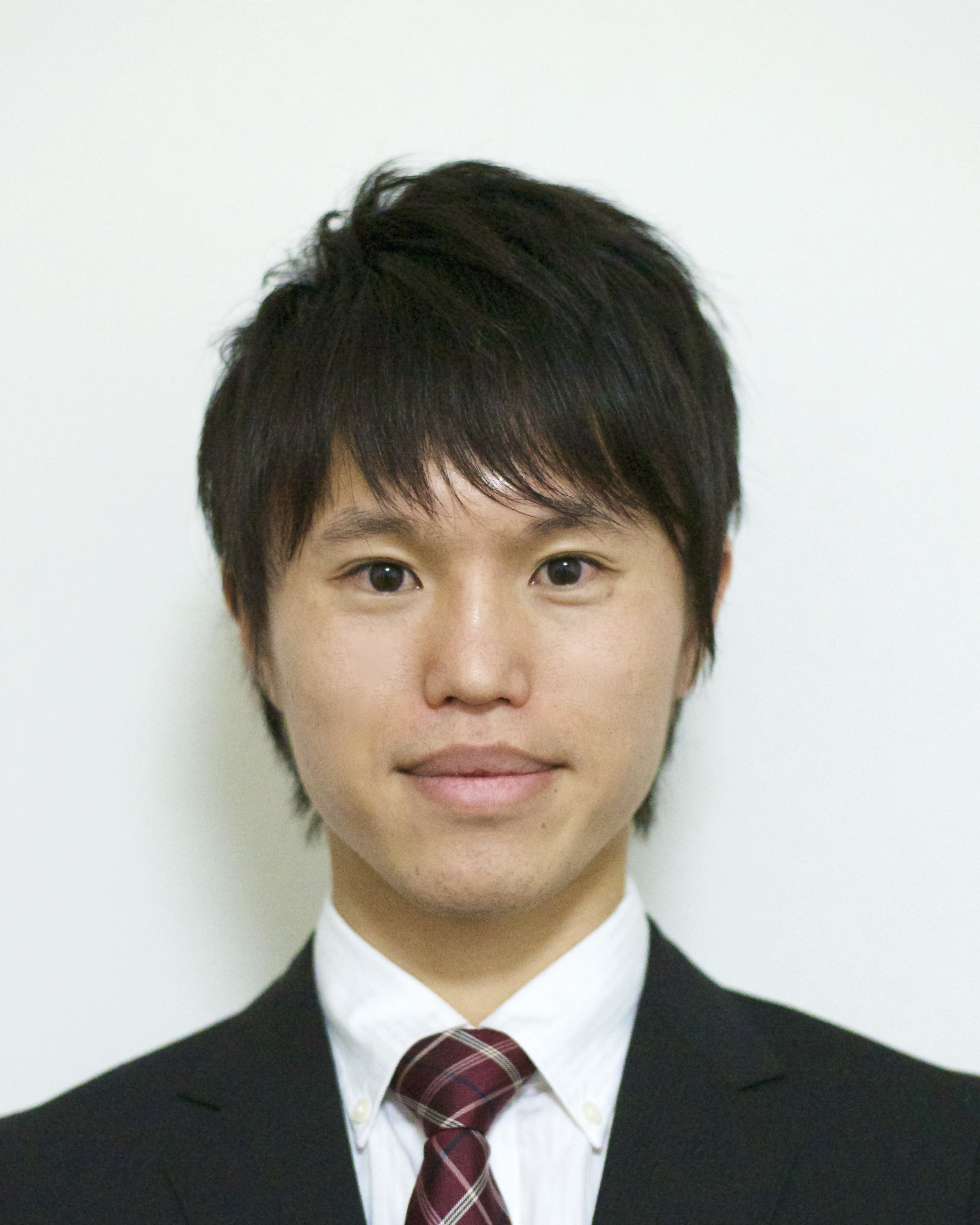}}]{Kosuke Takahashi}
 received his B.Sc. degree in engineering and M.Sc. and Ph.D. in informatics from Kyoto
 University, Japan, in 2010, 2012 and 2018, respectively. He is currently a research engineer at UMITRON. His research interest includes computer vision and its applications for sports performance enhancement and aquaculture. He received "Best Open Source
 Code" award Second Prize in CVPR 2012. He is a member of IPSJ.
 \end{IEEEbiography}

\begin{IEEEbiography}[{\includegraphics[width=1in,height=1.25in,clip,keepaspectratio]{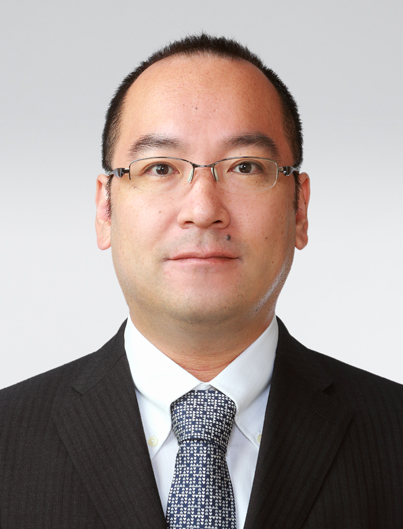}}]{Shohei Nobuhara}
received his B.Sc. in Engeneering, M.Sc. and Ph.D. in Informatics from Kyoto
University, Japan, in 2000, 2002, and 2005 respectively. Since 2019, he has been
an associate professor at Kyoto University. His research interest includes computer
vision and human shape and motion modelling. He is a member of IEEE, IPSJ, IEICE.
\end{IEEEbiography}

\end{document}